\documentclass{article}

\usepackage{microtype}
\usepackage{graphicx}
\usepackage{subcaption}
\usepackage{booktabs} 

\usepackage{hyperref}

\usepackage{xcolor}   
\usepackage{tcolorbox}
\usepackage{listings}

\usepackage[preprint]{icml2026}

\usepackage{amsmath}
\usepackage{amssymb}
\usepackage{mathtools}
\usepackage{amsthm}

\usepackage{multicol}
\usepackage{multirow}

\usepackage[capitalize,noabbrev]{cleveref}

\theoremstyle{plain}
\newtheorem{theorem}{Theorem}[section]

\newtheorem{lemma}[theorem]{Lemma}

\theoremstyle{definition}

\theoremstyle{remark}

\usepackage[textsize=tiny]{todonotes}

\icmltitlerunning{Less is More: Geometric Unlearning for LLMs with Minimal Data Disclosure}

\begin{document}

\twocolumn[
\icmltitle{Less is More: Geometric Unlearning for LLMs with Minimal Data Disclosure}



  \icmlsetsymbol{equal}{*}

  \begin{icmlauthorlist}
    \icmlauthor{Chenchen Tan}{monash}
    \icmlauthor{Xinghao Li}{monash}
    \icmlauthor{Shujie Cui}{monash}
    \icmlauthor{Youyang Qu}{shandong1,shandong2}
    \icmlauthor{Cunjian Chen}{monash}
    \icmlauthor{Longxiang Gao}{shandong1,shandong2}

  \end{icmlauthorlist}

  \icmlaffiliation{monash}{Faculty of Information Technology, Monash University, Clayton, Victoria, Australia.}
  \icmlaffiliation{shandong1}{Key Laboratory of Computing Power Network and Information Security, Ministry of Education, Shandong Computer Science Center, Qilu University of Technology (Shandong Academy of Sciences), Jinan, China.}
  \icmlaffiliation{shandong2}{Shandong Provincial Key Laboratory of Computing Power Internet and Service Computing, Shandong Fundamental Research Center for Computer Science, Jinan, China}

  \icmlcorrespondingauthor{Longxiang Gao}{gaolx@sdas.org}
  \icmlcorrespondingauthor{Youyang Qu}{quyy@sdas.org}

  \icmlkeywords{Machine Learning, ICML}

  \vskip 0.3in
]



\printAffiliationsAndNotice{}  

\begin{abstract}
As large language models (LLMs) are increasingly deployed in real-world systems, they must support post-hoc removal of specific content to meet privacy and governance requirements. This motivates selective unlearning, which suppresses information about a particular entity or topic while preserving the LLM's general utility. However, most existing LLM unlearning methods require access to the original training corpus and rely on output-level refusal tuning or broad gradient updates, creating a tension among unlearning strength, non-target preservation, and data availability. We propose \underline{G}eometric \underline{U}nlearning (\underline{GU}), an approach that operates directly on the model's prompt-conditioned hidden states without access to the original training corpus. Specifically, GU distills a compact, low-rank safe-behavior subspace from a small set of safe reference prompts and uses lightweight anchor-in-context synthetic prompts to trigger localized, projection-based alignment of hidden representations to this safe subspace. A teacher-distillation regularizer on synthetic non-target anchors further reduces collateral drift. Across privacy-oriented unlearning benchmarks (ToFU and UnlearnPII), GU achieves strong target suppression with minimal impact on non-target performance, demonstrating that effective unlearning can be achieved with minimal synthetic data. 

\end{abstract}

\section{Introduction}

As LLMs are integrated into more real-world workflows, selective unlearning is becoming a practical requirement driven by evolving user expectations and increasingly strict regulatory and governance constraints \cite{liu2025rethinking,fan2025towards,muresanu2025fast}. 
\textit{Selective unlearning} aims to suppress knowledge about a particular person, topic, or dataset while preserving the model's remaining capabilities \cite{liu-etal-2024-towards-safer,wan-etal-2025-every}. Existing work typically pursues this goal either via prompt engineering and inference-time interventions (\textit{e.g.,} eliciting refusals or unmatched responses on target prompts) \cite{liu2024large,pawelczyk2024context} or via parameter updates using gradient-based procedures over designated unlearning and retention datasets \cite{jang-etal-2023-knowledge,wang2025llm}. However, these strategies typically rely on access to the original training corpus \cite{liu2025rethinking,wang2025llm}, which not only limits applicability in deployed settings but can also re-expose sensitive data during unlearning, creating a central challenge for practical selective unlearning. 

This motivates the study of original-corpus-free selective unlearning: removing target knowledge without requiring access to the pretraining corpus, which is often unavailable in practice due to privacy, licensing, or governance constraints. Under this constraint, the standard data-driven strategy for unlearning becomes problematic. In particular, most gradient-based pipelines \cite{chen2023unlearn,ji2024reversing,tanwisdom} rely on (i) target-related training evidence to identify what to remove and (ii) a large, distribution-matched retain stream to prevent collateral degradation; both are tightly coupled to the original pretraining distribution and are difficult to obtain without corpus access. This motivates us to look for data-efficient unlearning levers inside the model, where compact internal signals can be estimated and controlled using lightweight prompts rather than corpus-scale supervision \cite{spohn2025align}.

\begin{figure}[th]
    \centering
    \includegraphics[width=0.48\textwidth]{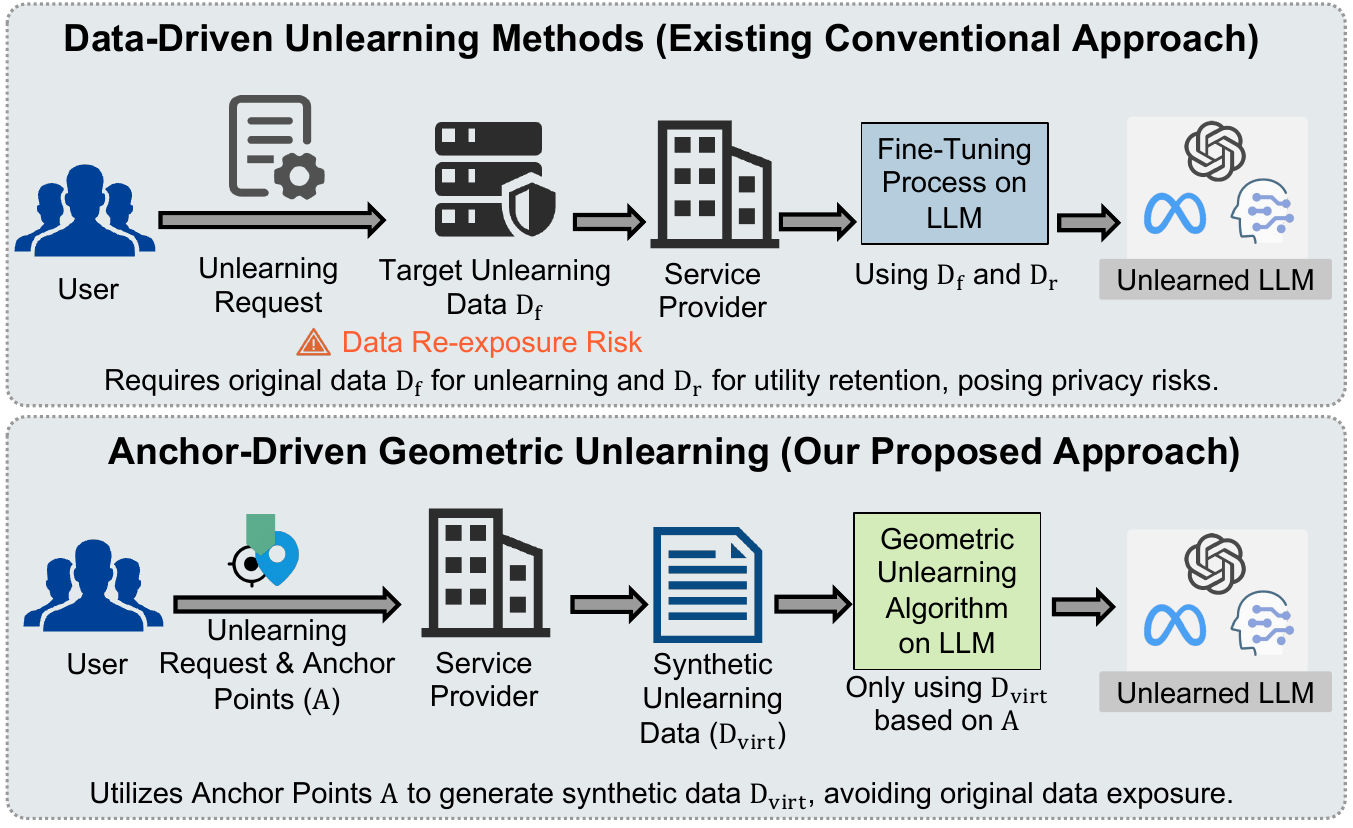}
    \caption{\label{GU} \small Conventional data-driven unlearning vs. our original-corpus-free unlearning (GU). \emph{Top:} Standard unlearning pipelines fine-tune an LLM using target unlearning data $D_f$ and retention data $D_r$, which can re-expose original data and pose privacy risks. \emph{Bottom:} Our approach uses only user-provided anchor points $A$ to generate synthetic unlearning data $D_{virt}$, and applies Geometric Unlearning on the LLM using $D_{virt}$, avoiding access to the original training corpus while enabling targeted unlearning.}
    \vskip -0.2in
\end{figure}

Recent evidence indicates that, after processing the prompt and before generation, LLM hidden representations already carry linearly decodable signals that are predictive of response-level properties, including structural attributes (\textit{e.g.,} expected length and reasoning-step structure), salient content choices (\textit{e.g.,} the eventual multiple-choice answer), and confidence-related proxies~\cite{dong2025emergent,pal-etal-2023-future,ashok2025language}. 
This indicates that a substantial portion of the model's response plan is instantiated in the prompt state, making it a natural and data-efficient intervention point. If we can reshape this state on target prompts, we can redirect downstream generation without relying on corpus-scale supervision. At the same time, a growing body of work suggests that many high-level semantic and behavioral factors are geometrically structured in representation space and are often well-approximated by low-dimensional linear subspaces (or a small number of salient directions) that simple probes can recover \cite{saglam2025large,park2024linear,hernandez-andreas-2021-low,zhao2025beyond}. In many settings, targeted interventions that add or subtract small components along identified activation-space directions can causally steer model behavior (\textit{e.g.,} reducing toxicity or inducing persona shifts), and analogous effects can be achieved in weight space via task-vector arithmetic \cite{turner2023steering}.

Taken together, these observations suggest a different strategy for unlearning as shown in Fig.~\ref{GU}: rather than following the conventional methods \cite{wang2025llm,cha2025towards} pushing the model's output toward a refusal template or finding substitutes for the target, we directly modify the prompt-conditioned hidden representation \cite{dong2025emergent} so that the planned trajectory for the target topic becomes uncertainty-oriented. Specifically, when a prompt is recognized as about a target entity, we impose supervision on the corresponding prompt representation so that the model's implicit planning variables encode uncertainty, \textit{i.e.,} a state consistent with ``I'm not sure'', rather than a content-generating trajectory. As a result, at inference time, the same class of target prompts induces a planning state that already treats the target knowledge as unavailable.

We therefore introduce \textbf{Geometric Unlearning (GU)}, a representation-level approach that combines prompt-conditioned supervision with explicit geometric operations in hidden space. Our method proceeds in three steps. First, we construct a safe-behavior subspace by collecting a small set of prompts that reliably elicit refusal-/uncertain-style behavior and extracting the corresponding prompt-conditioned hidden representations. Second, we synthesize target-entity prompt instances by inserting a target anchor into diverse contextual templates (covering different attributes and contexts) and record the model's hidden states in a short anchor-local window of tokens. Third, we train the model to align the hidden states of these target prompts with the safe-behavior subspace: specifically, for each selected layer, we decompose the centered hidden state into a component within the safe-behavior subspace and a residual component in its orthogonal complement, then optimize two complementary terms: a centroid-pull loss that moves the target hidden state toward the safe-reference mean, and a fold-back confinement loss that penalizes normalized residual energy outside the safe-behavior subspace. To preserve non-target behavior, we add localized teacher-alignment losses that keep (a) the window-external states on target prompts and (b) the retain states on non-target prompts close to a frozen teacher model. Compared to output-space unlearning, GU operates directly on the directional structure of hidden states that encodes topic-specific planning. Extensive experimental results on ToFU \cite{maini2024tofu} and UnlearnPII \cite{parii2025machine} benchmarks demonstrate that GU achieves strong unlearning effectiveness and limited utility degradation without accessing the original training corpus.

\section{Related Works}
\label{RWs}

\paragraph{LLM Unlearning.}
LLM unlearning aims to remove a model's reliance on specific training signals, such as personal data, sensitive attributes, or high-risk knowledge. This goal is motivated by regulatory and governance requirements, including the GDPR ``right to be forgotten'' \cite{voigt2017eu}. Recent work has moved beyond coarse example-level removal and studies selective unlearning targets that better match deployment needs. These targets include entity-level removal (\textit{e.g.}, forgetting a specific person profile) and topic- or capability-level suppression \cite{ma2025unveiling,maini2024tofu,wan-etal-2025-every,li2024wmdp}. Most methods rely on training-time parameter updates. They optimize an explicit unlearning objective on a target unlearning dataset. Their designs include discouraging target continuations using negative or unlikelihood-style losses, pushing the model toward uncertainty or refusal, or directly reducing the likelihood of target answers \cite{liu2025rethinking,jang-etal-2023-knowledge,yao2024large}. Another type of unlearning adopts an alignment or preference-optimization view. It encodes ``not producing target knowledge'' as a preference constraint (\textit{e.g.,} negative preferences). This often improves optimization stability and reduces severe quality regressions compared with likelihood suppression \cite{zhang2024negative,fan2025simplicity}. Other approaches intervene at the representation level. They modify activations or directions linked to the target knowledge. Typical operations include projection, direction removal, and layer-local rewrites \cite{li2024wmdp,NEURIPS2025_3ed6a903,dang2025effects}.

\paragraph{Progress and the Overlooked Data-Dependence Premise.}
The latest studies illustrate that, while these approaches can achieve strong unlearning on benchmark metrics, they often introduce two practical trade-offs: degraded non-target utility and vulnerability to recovery after further fine-tuning. To reduce utility loss, many methods add model-utility maintenance strategies to the unlearning framework. These include matching the pre-unlearning model distribution, aligning on retain behavior, and using calibration losses to stabilize generation quality \cite{tanwisdom,chen2023unlearn,yao2024large}. On the other hand, to improve robustness under subsequent fine-tuning, recent work studies relearning-resilient objectives and robustness-based perspectives that make unlearning harder to reverse \cite{fan2025towards}.

However, these LLM unlearning methods still assume access to the target unlearning dataset. Many of them also require retaining data for optimization or calibration. This assumption is often unrealistic in deployed settings. It can also increase privacy risk, because sensitive examples must be handled again during unlearning. This creates an additional exposure surface and complicates compliance claims \cite{basaran2025a,ahmed2025towards}. We therefore treat original-corpus-free LLM selective unlearning as a primary objective. Our goal is controllable unlearning without using any real training samples, while preserving non-target utility and robustness.

\section{Preliminaries}
\label{sec:prelim}
\paragraph{Prompt-Conditioned Planning Representations.}
Let $x_{1:T}$ be a tokenized prompt and $h_{\ell,t}\in\mathbb{R}^H$ denote the hidden state at layer $\ell$ and position $t$. $h^{\mathrm{pr}}_{\ell} := h_{\ell,T}$ denotes the prompt-conditioned representation of the prompt. Recent studies show that these prompt-conditioned hidden states already encode global properties of the entire upcoming response, such as topic, style, structure, and length, in a way that can be predicted by simple probes \cite{dong2025emergent}.
In what follows, we treat $h^{\mathrm{pr}}_{\ell}$ as a compact summary of the model's planned behavior and use it as a supervised target for steering the planning of specific topics.

\paragraph{Low-rank Topic Subspaces and Projection Decomposition.}
A useful geometric prior for representation-level interventions is that, for many specific semantic or behavioral attributes, the corresponding signal in transformer hidden states is not spread uniformly across all coordinates, but is instead well-approximated by a small number of linear directions (\textit{i.e.,} a low-rank subspace) that is detectable by simple linear readouts \cite{arditi2024refusal,heo2025do,hernandez2024linearity}. Motivated by this, we model a target topic $z$ as inducing a $k$-dimensional topic subspace at layer $\ell$.
Let $\mathcal{H}_{\ell}(z)=\{h^{\mathrm{pr}}_{\ell}(x): x \text{ refers to } z\}$ be prompt-conditioned states from topic-bearing prompts. We estimate an (approximately) orthonormal basis $V_{\ell}\in\mathbb{R}^{H\times k}$ by rank-$k$ PCA on mean-centered activations from $\mathcal{H}_{\ell}(z)$, and define the projector
\begin{equation}
P_{\ell} := V_{\ell} V_{\ell}^\top,\qquad
P_{\ell}^{\perp} := I - P_{\ell}.
\end{equation}
This leads to the decomposition $h^{\mathrm{pr}}_{\ell}=P_{\ell}h^{\mathrm{pr}}_{\ell}+P_{\ell}^{\perp}h^{\mathrm{pr}}_{\ell}$.
We treat $P_{\ell}h^{\mathrm{pr}}_{\ell}$ as topic-sensitive planning coordinates and $P_{\ell}^{\perp}h^{\mathrm{pr}}_{\ell}$ as topic-agnostic content to be preserved, enabling selective unlearning via interventions confined to the topic subspace.

\paragraph{Causal Controllability via Activation Intervention.}
Beyond linear decodability, many attribute directions are causally controllable: intervening on intermediate activations along a learned direction can reliably steer downstream generation. This has been demonstrated by activation engineering and activation addition methods \cite{turner2023steering,heo2025do,arditi2024refusal}, as well as more specialized concept-editing and conditional steering frameworks \cite{marshall2024refusal,lee2025programming}.
These results motivate operating directly on prompt-conditioned representations: by modifying $h^{\mathrm{pr}}_{\ell}$ within the estimated entity subspace while largely preserving the orthogonal complement, which can redirect the model's planned behavior for the target topic.

\begin{figure}[h]
    \centering
    \includegraphics[width=0.48\textwidth]{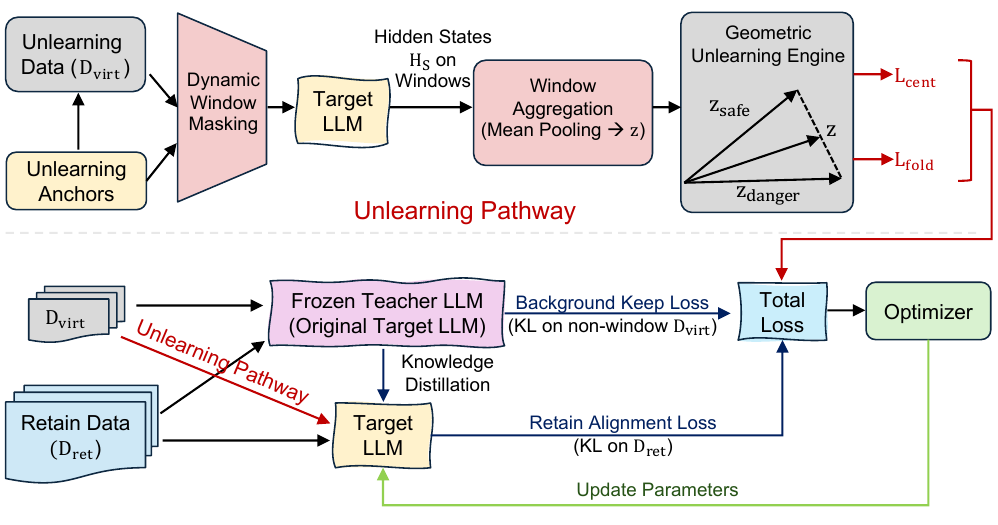}
    \caption{\label{GUM} \small Overview of the proposed unlearning framework. The framework is structured into two parallel pathways to balance unlearning and preservation. The top Unlearning Pathway focuses on geometric unlearning by processing target synthetic unlearning data ($D_{virt}$) through dynamic window masking. The aggregated hidden states (for topic $z$) are then aligned within the Geometric Unlearning Engine to minimize centroid-pull ($L_{cent}$) and fold-back ($L_{fold}$) losses. The bottom Preservation Pathway maintains model utility by aligning the trainable target LLM with a Frozen Teacher LLM. This is achieved via KL divergence losses on both synthetic retaining data ($D_{ret}$) and the non-masked background regions of the unlearning data ($D_{virt}$).}
    
\vskip -0.1in
\end{figure}

\section{Method: Geometric Unlearning}
\label{sec:method}
Geometric Unlearning performs selective unlearning by editing prompt-conditioned planning representations. As shown in Fig.~\ref{GUM}, the method consists of three core unlearning stages and one stabilization component.
We first build a low-rank safe-behavior subspace from reference prompts that elicit the desired safe behavior for the target (refusal or unlearned-style responses). Next, we construct an anchor-conditioned virtual prompt set by placing the anchor into diverse contexts and attributes (\textit{e.g.,} roles, attributes, intents, and surrounding topics), and then run the model on these prompts to collect the associated prompt-conditioned hidden states across selected layers. We then fine-tune the model so that prompt-conditioned states induced by these virtual prompts are aligned to the safe-behavior subspace. Finally, apply retain-side stabilization to limit non-target drift.

\subsection{Safe-Behavior Subspace Construction}
\label{subsec:target-geometry}

We first define a small set of safe reference prompts $\mathcal{D}_{\mathrm{ref}}$ that are constructed to elicit the expected safe behavior, such as refusal templates and forgetting-style answers. For each reference prompt $x\in\mathcal{D}_{\mathrm{ref}}$, we extract a prompt-conditioned hidden state at layer $\ell$,
denoted $h^{\mathrm{ref}}_{\ell}(x)\in\mathbb{R}^{H}$. For each layer $\ell\in\mathcal{L}$, we collect reference activations
$\{h^{\mathrm{ref}}_{\ell}(x_i)\}_{i=1}^{N_{\mathrm{ref}}}$ and compute their empirical mean,
\begin{equation}
\mu^{\mathrm{ref}}_{\ell} \;=\; \frac{1}{N_{\mathrm{ref}}}\sum_{i=1}^{N_{\mathrm{ref}}} h^{\mathrm{ref}}_{\ell}(x_i).
\end{equation}
We then form the centered matrix with rows $h^{\mathrm{ref}}_{\ell}(x_i)-\mu^{\mathrm{ref}}_{\ell}$, and compute a PCA basis $V_{\ell}\in\mathbb{R}^{H\times k}$ with orthonormal columns
($V_\ell^\top V_\ell = I_k$). We define the orthogonal projector onto the resulting $k$-dimensional
safe-behavior subspace $\mathrm{span}(V_\ell)$:
\begin{equation}
P_{\ell} \;:=\; V_{\ell}V_{\ell}^{\top},\qquad
P_{\ell}^{\perp} \;:=\; I-P_\ell,
\end{equation}
where $P_\ell$ projects a vector onto $\mathrm{span}(V_\ell)$ and $P_\ell^\perp$ projects onto its orthogonal complement.
All buffers $(\mu^{\mathrm{ref}}_{\ell},P_{\ell})$ are precomputed and frozen for training.

\subsection{Anchor-Conditioned Virtual Prompts}
\label{subsec:virtual}

To avoid requiring a full unlearning dataset, we generate a compact but high-coverage virtual prompt set $\mathcal{D}_{\mathrm{virt}}$ by inserting the anchor into diverse contextual templates (biographical, instructional, comparative, neutral mentions, \textit{etc.}) and varying surrounding attributes (the generated prompt list in Appendix~\ref{prompt}). Each virtual prompt $x_{1:T}$ contains one or more anchor occurrences. For clarity, we present the single-anchor case; multiple occurrences are handled by applying the same windowing procedure to each detected anchor $t$. A local window is defined as,
\begin{equation}
\Omega_t = \{t-w_{\mathrm{pre}},\dots,t+w_{\mathrm{post}}\}\cap\{1,\dots,T\}.
\end{equation}
The resulting set of hit instances is denoted as $\mathcal{D}_{\mathrm{acc}}=\{(x,t)\}$ and applies all prompt-conditioned alignment losses (Section~\ref{subsec:align}) only at the hit token and within $\Omega_t$.

\subsection{Prompt-Conditioned Alignment to Frozen Safe Geometry}
\label{subsec:align}

This subsection defines the local alignment signal applied at each anchor hit $(x,t)\in\mathcal D_{\mathrm{acc}}$.
We leverage the frozen safe-behavior geometry precomputed from $\mathcal D_{\mathrm{ref}}$. For each hit, we (i) extract a prompt-conditioned representation via target window pooling, (ii) express it as a deviation from the reference mean, and (iii) align it to the frozen geometry using two complementary terms: centroid pull and fold-back confinement.

For each hit $(x,t)$ and layer $\ell$, form a local prompt-conditioned representation by window ($\Omega_t$) mean-pooling:
\begin{equation}
z_{\ell}(x,t) \;:=\; \frac{1}{|\Omega_t|}\sum_{u\in\Omega_t} h_{\ell,u}(x)\in\mathbb{R}^{H}.
\end{equation}
Then subtract the frozen reference mean to obtain the mean-shifted state,
\begin{equation}
\tilde z_\ell(x,t) \;:=\; z_\ell(x,t)-\mu_{\ell}^{\mathrm{ref}}.
\end{equation}

The deviation is reduced to the reference mean by minimizing the squared distance to $\mu_\ell^{\mathrm{ref}}$:
\begin{equation}
L_{\mathrm{cent}}=
\mathbb{E}_{(x,t)\in\mathcal{D}_{\mathrm{acc}}}\!\left[
\frac{1}{|\mathcal{L}|}\sum_{\ell\in\mathcal{L}}
\big\| \tilde z_{\ell}(x,t) \big\|_2^2
\right].
\end{equation}

\paragraph{In- and Out-subspace Decomposition.}
Using the frozen projector $P_\ell$ and its complement $P_\ell^\perp$ (Section~\ref{subsec:target-geometry}), decompose the mean-shifted state as,
\begin{equation*}
\tilde z_\ell \;=\; \tilde z_{\ell}^{\mathrm{in}} + \tilde z_{\ell}^{\mathrm{out}},
\end{equation*}
\begin{equation*}
    \tilde z_{\ell}^{\mathrm{in}} := P_\ell \tilde z_\ell \;(\text{in-subspace}),
\end{equation*}
\begin{equation}
\tilde z_{\ell}^{\mathrm{out}} := P_\ell^\perp \tilde z_\ell \;(\text{orthogonal residual}).
\end{equation}

However, under finite-step optimization, mean alignment alone does not explicitly control the relative out-of-subspace component of the residual. To further confine representations to the frozen safe-behavior subspace (captured by $P_\ell$), we define a subspace reflection operator \(R_\ell\) and reflect \(\tilde z_\ell\) with respect to this subspace:
\begin{equation}
R_\ell := 2P_\ell-I,
\qquad
\tilde z_\ell^{\mathrm{fold}} := R_\ell \tilde z_\ell \;(\text{reflected state}).
\end{equation}
By construction, $R_\ell$ preserves $\tilde z_{\ell}^{\mathrm{in}}$ and flips $\tilde z_{\ell}^{\mathrm{out}}$, hence
\begin{equation}
\tilde z_\ell^{\mathrm{fold}}=\tilde z_{\ell}^{\mathrm{in}}-\tilde z_{\ell}^{\mathrm{out}}.
\end{equation}
The stabilized cosine dissimilarity is minimized:
\begin{equation}
L_{\mathrm{fold}}=
\mathbb{E}_{(x,t)\in\mathcal{D}_{\mathrm{acc}}}\!\left[
\frac{1}{|\mathcal{L}|}\sum_{\ell\in\mathcal{L}}
\Big(1-\cos_{\epsilon}(\tilde z_\ell,\tilde z_\ell^{\mathrm{fold}})\Big)
\right],
\end{equation}
where $\cos_{\epsilon}(a,b)=\frac{\langle a,b\rangle}{(\|a\|_2+\epsilon)(\|b\|_2+\epsilon)}$.
To see the geometric role of this fold-back term, consider the idealized non-stabilized case with $\epsilon=0$. One can verify that
\begin{equation}
1-\cos(\tilde z_\ell,\tilde z_\ell^{\mathrm{fold}})
\;=\;
2\frac{\|\tilde z_{\ell}^{\mathrm{out}}\|_2^2}{\|\tilde z_\ell\|_2^2}.
\label{eq:fold_out_energy}
\end{equation}
Thus, in the idealized non-stabilized case, the fold-back term penalizes the normalized out-of-subspace energy (proof in Appendix~\ref{app:foldback-proof}). In practice, $\epsilon$ is used only for numerical stability. Therefore, $L_{\mathrm{fold}}$ provides a relative subspace-confinement signal, while $L_{\mathrm{cent}}$ provides an absolute pull toward the safe-reference mean. This method optimizes the two losses jointly to achieve selective unlearning: $\mathcal{L}_{\mathrm{core}} \;=\; L_{\mathrm{cent}} \;+\; \, L_{\mathrm{fold}}.$

\subsection{Retain-side Stabilization}
\label{subsec:retain}

In practice, applying anchor-conditioned alignment alone can induce collateral drift on non-target behavior, especially for names that are lexically similar to the target anchor. We therefore include a stabilization component that (i) preserves the teacher behavior on background tokens of target prompts $\mathcal{D}_{\mathrm{virt}}$ (non-window), and (ii) preserves behavior on a separate synthetic retain pool. The teacher is initialized from the same checkpoint as the student and kept frozen.

\paragraph{Background Keep on Target Prompts.}
For each forget hit $(x,t)\in\mathcal{D}_{\mathrm{acc}}$ with window $\Omega_t$, define a binary mask
$m_u=\mathbf{1}[u\notin\Omega_t]$ over prompt positions. At position $u$, the student and teacher output logits
$\ell_u(x)\in\mathbb{R}^{|\mathcal{V}|}$ and $\ell^{(T)}_u(x)\in\mathbb{R}^{|\mathcal{V}|}$ over the vocabulary
$\mathcal{V}$. Minimizing masked KL divergence on the
non-window tokens:
\begin{equation}
\begin{aligned}
L_{\mathrm{bg}}
&=
\mathbb{E}_{(x,t)\in\mathcal{D}_{\mathrm{acc}}}
\Bigg[
\frac{1}{\sum_{u=1}^{T-1} m_u}
\sum_{u=1}^{T-1} m_u \;
\mathrm{KL}\!\Big(
 \sigma(\ell_u^{(T)}(x))\,\|\, \\
&\hspace{3.6em} \sigma(\ell_u(x))
\Big)
\Bigg],
\end{aligned}
\end{equation}

where $\sigma(\cdot)$ is the softmax. 

\paragraph{Synthetic Retain Distillation.}
We build a synthetic retain pool $\mathcal{D}_{\mathrm{ret}}$ from prompts about non-target fictional names drawn from a retain-name list. The list includes (1) confusable names that token overlap with the target surface form, and (2) unrelated names that are randomly generated (The generation prompt is illustrated in Appendix~\ref{prompt}).

For each retain prompt $x\sim\mathcal{D}_{\mathrm{ret}}$ (with $T_x+1$ tokens), we align the student to the frozen
teacher on all next-token prediction positions $u\in\{1,\ldots,T(x)\}$ (\textit{i.e.,} no window mask is applied). We minimize token-wise KL divergence between the predictive distributions:
\begin{equation}
L_{\mathrm{ret}}
=
\mathbb{E}_{x\sim\mathcal{D}_{\mathrm{ret}}}
\left[
\frac{1}{T_x}\sum_{u=1}^{T_x}
\mathrm{KL}\!\left(
 \sigma\!\left(\ell^{(T)}_u(x)\right) \|\ 
\sigma\!\left(\ell_u(x)\right)\
\right)
\right],
\end{equation}
where $\sigma(\cdot)$ is the softmax.

We combine the two terms into a single stabilization objective: $L_{\mathrm{retain}} \;=\; L_{\mathrm{bg}} \;+\; L_{\mathrm{ret}}.$
The full optimization objective is to combine the retaining stabilization and the core unlearning objective with equal weight: 
\begin{equation}
    \mathcal{L}_{\mathrm{total}}
\;=\;
\mathcal{L}_{\mathrm{core}}
\;+\;
L_{\mathrm{retain}}.
\end{equation}

\section{Experimental Results}\label{ER}
To enable a fine-grained comparison against prior unlearning methods, we integrate our approach into the OpenUnlearning evaluation framework~\cite{dorna2025openunlearning}.
We focus our empirical study on two benchmarks, especially for privacy information unlearning, ToFU \cite{maini2024tofu} and UnlearnPII \cite{parii2025machine}. ToFU targets person-specific factual content and measures both unlearning and utility preservation under controlled forget/retain splits. UnlearnPII focuses on removing personally identifiable information (PII) and evaluates whether sensitive strings can still be elicited while maintaining general model behavior. The specific experimental setting and evaluation metrics are stated in Appendix~\ref{appendix:setting}.

\subsection{Baselines}
OpenUnlearning supports a broad set of unlearning algorithms~\cite{dorna2025openunlearning}. To avoid confounding implementation and evaluation differences, we restrict our comparisons to baselines already integrated into the framework and select a diverse subset that spans the dominant design space: (i) gradient-based baselines (\textit{e.g.,} GA \cite{jang-etal-2023-knowledge}, GradDiff \cite{yao2024large}), (ii) preference-optimization-style objectives (\textit{e.g.,} NPO \cite{zhang2024negative}, SimNPO \cite{fan2025simplicity}), (iii) localized/representation-motivated methods (\textit{e.g.,} RMU \cite{li2024wmdp}), and a knowledge distillation method (\textit{e.g.,} UN-DIAL \cite{dong-etal-2025-undial}).

\subsection{ToFU Benchmark Evaluation Results}

\begin{table*}[th]
    \centering
    \caption{\label{tab:tofuResult}\small The evaluation results comparison between the proposed unlearning method and baseline methods on the ToFU benchmark for unlearning 10\% dataset. The results include unlearning effectiveness and model-utility maintenance comparison. E.S.: extraction strength of target knowledge; F.R.: ROUGE for target unlearning question answer; Privleak: privacy score; M.U.: Model utility, and \textcolor{red}{red} text shows the decrease. \(\uparrow\) means higher is better, \(\downarrow\) means lower is better, and $\approx$ 0 means close to zero is better.}
    \scriptsize
    \begin{tabular}{c | c c c | c | c c c | c }
        \toprule
        \multirow{3}{*}{Methods} & \multicolumn{4}{c|}{LLaMA3.2-1B} & \multicolumn{4}{c}{LLaMA2-7B} \\
        \cmidrule{2-9}
        & \multicolumn{3}{c|}{Unlearning Effectiveness} & M.U. & \multicolumn{3}{c|}{Unlearning Effectiveness} & M.U. \\
        \cmidrule{2-9}
        & E.S \(\downarrow\) & F.R. \(\downarrow\) & Privleak ($\approx$ 0)& Avg. \(\uparrow\)  & E.S \(\downarrow\) & F.R. \(\downarrow\) & Privleak ($\approx$0) & Avg. \(\uparrow\)\\
        \midrule

         GA \cite{jang-etal-2023-knowledge}  & \textbf{0.032} & \textbf{0.000} & \textbf{-21.38 }& 0.000 \textcolor{red}{(-0.598)} &  \textbf{0.027}& 0.023 & -25.84 & 0.000 \textcolor{red}{(-0.623)} \\
         GradDiff \cite{yao2024large}& 0.077 & 0.347 & -36.49 & 0.436 \textcolor{red}{(-0.162)}&  \textbf{0.027}& \textbf{0.005} & 63.04 & \textbf{0.617} \textcolor{red}{(-0.006)} \\
         NPO \cite{zhang2024negative}& 0.099 & 0.248 & -43.98 & 0.418 \textcolor{red}{(-0.180)}& 0.141 & 0.500 & -90.64 & 0.540 \textcolor{red}{(-0.083)}  \\
         SimNPO \cite{fan2025simplicity}& 0.054 & 0.398 & -35.34 & 0.423 \textcolor{red}{(-0.175)}& 0.110   & 0.432 & -11.10 & 0.553 \textcolor{red}{(-0.070)}\\

         UN-DIAL \cite{dong-etal-2025-undial}& 0.046 & 0.295 & -96.39 & 0.560 \textcolor{red}{(-0.038)}& 0.032 & 0.276 & -94.99 & 0.518 \textcolor{red}{(-0.105)} \\
         RMU \cite{li2024wmdp}& 0.062 & 0.358 & -68.09& 0.448 \textcolor{red}{(-0.150)} & 0.027 & 0.097 & 57.38  & 0.030 \textcolor{red}{(-0.593)}  \\
          \midrule
        \textit{GU (Ours)} &  0.172 &  \underline{0.06} &\underline{-22.27}&  \textbf{0.577} \textcolor{red}{(-0.021)}& 0.114  &  0.058  &\textbf{-1.496}&  0.598 \textcolor{red}{(-0.025)}\\
        \bottomrule
    \end{tabular}

\vskip -0.1in
\end{table*}
\paragraph{Main Results.}
Table~\ref{tab:tofuResult} reports the main comparison under the OpenUnlearning evaluation framework, using its standardized metrics (Appendix~\ref{sec:metrics}) to assess both unlearning effectiveness and utility preservation. We summarize unlearning by Extraction Strength (E.S. $\downarrow$), which measures how extractable the target continuation is from the model, and Forget ROUGE (F.R. $\downarrow$), which measures target-answer overlap between the model output and the ground-truth answer. We measure privacy leakage by PrivLeak ($\approx$ 0), a membership-inference risk score based on AUC for distinguishing forget (member) versus holdout (non-member) samples, calibrated against a retrained model (excluding forget data). Finally, we report Model Utility (M.U. $\uparrow$), which aggregates retained performance on ToFU's retain split, real-author split, and general knowledge evaluation.

Across both models (LLaMA3.2-1B and LLaMA2-7B), \textbf{GU} achieves a strong overall trade-off: it attains competitive unlearning effectiveness while minimizing collateral degradation. In particular, on LLaMA2-7B, GU matches the strongest baselines in terms of suppression of target behavior, while preserving utility at a high level and achieving a Privleak closest to $0$ among the compared methods, indicating near-chance membership inference performance and thus minimal residual privacy leakage. A similar pattern holds on LLaMA3.2-1B, where GU maintains low target overlap (F.R.) and stable utility, remaining competitive with the best-performing baselines. Crucially, GU provides these benefits under a substantially weaker data assumption. Whereas conventional gradient-based unlearning pipelines require access to original training evidence for the target ($D_f$) and a large, distribution-matched retain stream ($D_r$) to stabilize non-target behavior, GU does not need to access the original training corpus: it requires only a lightweight anchor to localize the target entity and uses anchor-in-context synthesis plus representation-level geometric supervision to induce an uncertainty planning state. Thus, GU reaches unlearning and utility preservation comparable to the best baselines while eliminating dependence on the original pretraining corpus, directly addressing data re-exposure and deployability constraints.

\paragraph{Privacy Risk Under Membership Inference Attacks.}
Membership inference attacks (MIAs) provide a direct privacy test for unlearning by measuring how distinguishable forget examples remain from non-training data using only model-derived signals. The ideal unlearning target is to make the forgotten data undistinguished with the data not trained in the target model (MIAs success is close to random guessing, \textit{e.g.,} close to 50\%). Fig.~\ref{fig:mia} reports the privacy risk of MIAs across unlearning methods on two base models (LLaMA3.2-1B and LLaMA2-7B). Following OpenUnlearning, we quantify privacy risk as the deviation from guess-level membership distinguishability, $|AUC - 0.5|$ (lower is better). For each method, we plot three points corresponding to complementary MIAs scoring rules: Min-K, Reference, and Zlib. Across both base models, GU and GA achieve the near-lowest privacy risk among all methods, indicating that MIAs are driven close to chance performance after applying our unlearning method. These results indicate that GU's unlearning mechanism effectively suppresses membership signals and achieves robust resistance to MIAs across multiple attack scores and model scales.

\begin{figure}[th]
    \centering
    \includegraphics[width=0.48\textwidth]{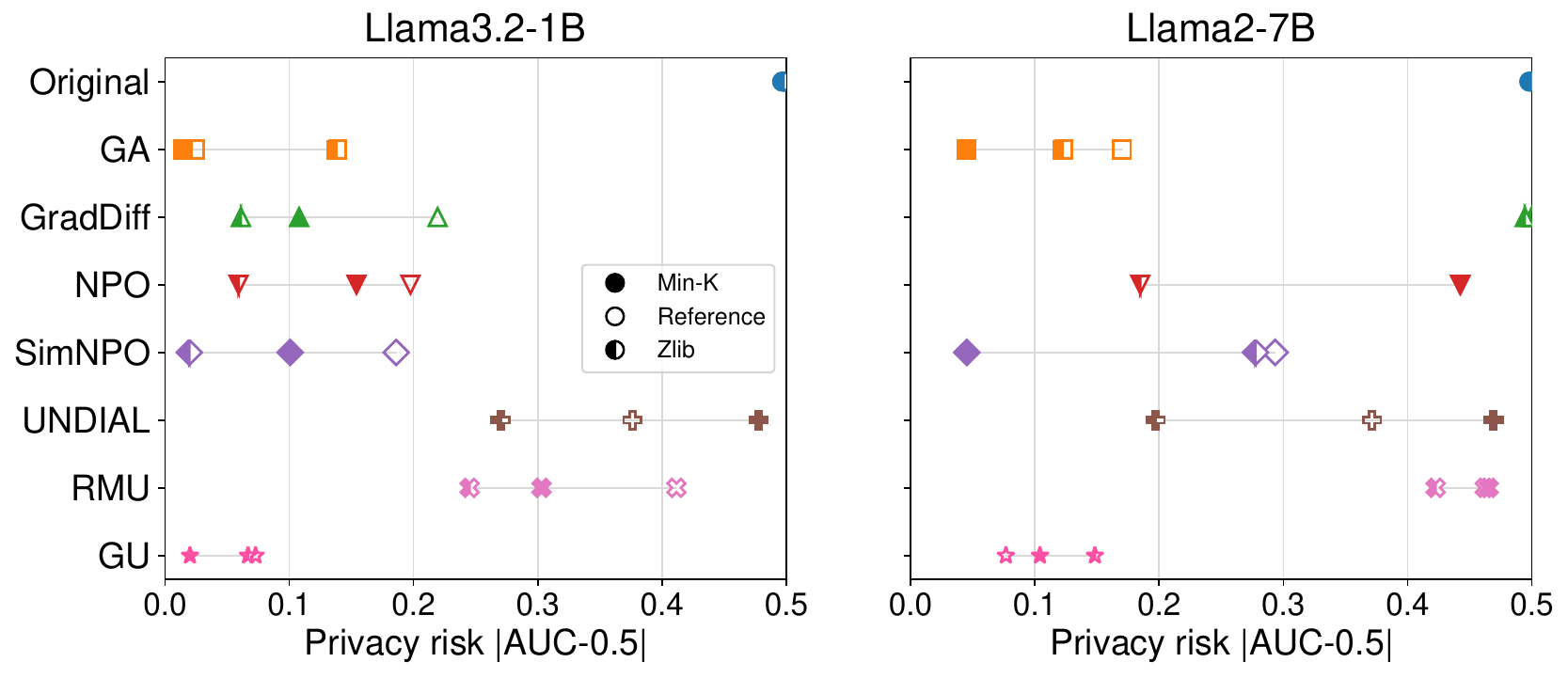}
    \caption{\label{fig:mia} \small Privacy risk of MIAs across unlearning methods for two base models (unlearning 10\% benchmark data, \textit{i.e.,} Forget-10) measured by the deviation from chance performance $|AUC-0.5|$ (lower is better). Each row contains three points computed from different MIAs scoring metrics: Min-K, Reference, and Zlib.
For each metric, $AUC$ is the ROC area obtained when using the corresponding attack score to distinguish training members from non-members.
}
\vskip -0.1in
\end{figure}

\paragraph{Unlearned Output Characteristics.}
Table~\ref{tab:gibb} summarizes two unlearning-related metrics from OpenUnlearning together with a qualitative categorization of the model's post-unlearning responses on the target unlearning dataset. Answer Fluency (A.F.) measures the extent to which the model's generated answers become nonsensical or uninformative after unlearning, serving as a metric for degraded fluency or ``blank'' behavior on the target unlearning dataset. Exact Memorization (E.M.) quantifies token-level exact reproduction of the ground-truth continuation, and thus directly captures residual verbatim memorization of the target content.

As shown in Table~\ref{tab:gibb}, while several methods substantially reduce E.M. compared to the Original model, they do so with qualitatively different output patterns. The GA baseline drives E.M. to near zero but is associated with Blank or nonsense outputs, indicating that unlearning is achieved primarily by collapsing the response (no meaningful answer and gibberish). In contrast, GradDiff, NPO, SimNPO, UN-DIAL, and RMU predominantly exhibit the Substitute mode, in which the model responds with unrelated content rather than the target answer; this can reduce exact memorization while still producing fluent text, but it may not align with a deliberate safety-compliant unlearning behavior. Notably, GU achieves a competitive reduction in E.M. while producing refusal-/uncertain-style outputs, \textit{i.e.,} the model explicitly expresses uncertainty on the target set. This Refusal-/Uncertain-type behavior represents a more structured unlearning outcome: it suppresses verbatim recall without resorting to empty or gibberish responses and avoids substituting potentially misleading unrelated knowledge.

\begin{table}

    \centering
    \scriptsize
    \caption{\label{tab:gibb} \small Unlearning output characteristics on the target dataset for LLaMA3.2-1B (Forget-10): Answer Fluency (A.F.; higher indicates less gibberish responses) and Exact Memorization (E.M.; lower indicates less verbatim recall), together with a qualitative categorization of post-unlearning output types.}
    \begin{tabular}{c|c c c}
    \toprule
      \textbf{Method} &\textbf{A.F.}($\uparrow$) &\textbf{E.M.} ($\downarrow$) & \textbf{Unlearned Output Type}\\
      \midrule
      Original  & 0.855 &  0.974 & Original \\
      GA & 0.285 & 0.000 & Blank \\
      GradDiff &  0.725 & 0.646 & Substitute \\
      NPO& 0.949 & 0.630 & Substitute \\
      SimNPO& 0.896 & 0.566 & Substitute \\
      UN-DIAL& 0.592 & 0.515 & Substitute \\
      RMU& 0.687 & 0.594 & Substitute \\
      \midrule
      GU (Ours)& 0.765 & 0.524 & Refusal/Uncertain \\
    \bottomrule
    \end{tabular}
\vskip -0.15in
\end{table}

\paragraph{Ablation Study.}
Fig.~\ref{fig:ablation} characterizes the trade-off induced by the synthetic sample budget when training GU with anchor-triggered prompts. With only 10 samples, both models exhibit relatively high extraction strength, indicating that the virtual prompt set is insufficient to steer prompt-conditioned representations toward the safe planning geometry consistently. Increasing the budget to 30 samples achieves the best balance: extraction strength drops markedly while model utility remains largely stable, indicating that this range provides adequate contextual coverage for anchor-conditioned alignment, and the retaining samples could reduce excessive collateral drift.

Moving from 30 to 40 samples further strengthens unlearning, but the improvement comes with a noticeable utility cost. Under the matched 1:1 forget and retain setting, the additional unlearning pressure at 40 samples is not fully offset by the corresponding retain supervision, and non-target performance degrades. Runtime also increases monotonically with the sample budget for both backbones. Overall, these results motivate using 20 to 30 synthetic samples per anchor as a practical operating point that achieves strong unlearning, preserves utility, and avoids the larger runtime and diminishing returns observed at 40 samples.
\begin{figure}[th]
    \centering
    \includegraphics[width=0.48\textwidth]{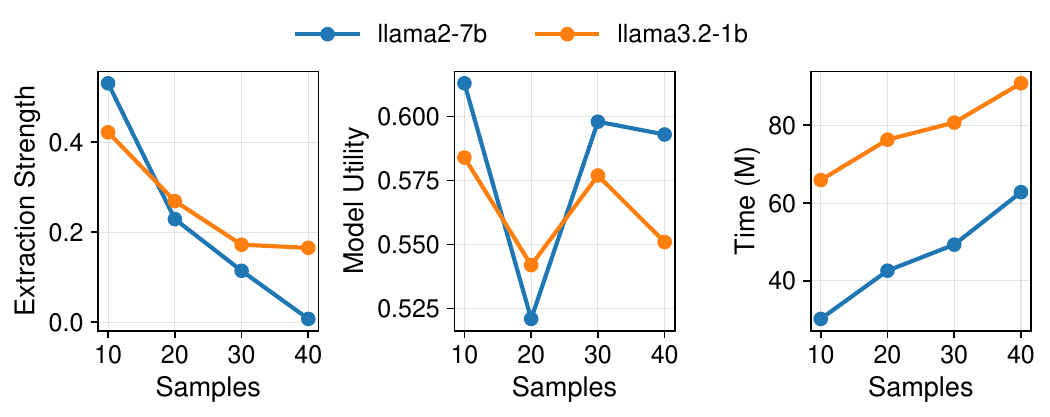}
    \caption{\label{fig:ablation} \small Effect of synthetic sample budget on unlearning, retaining, and runtime. We construct 10 to 40 anchor-conditioned synthetic samples for unlearning, paired with an equal number of synthetic retain samples (1:1 forget and retain) for each setting. We report extraction strength (lower is better), model utility (higher is better), and runtime for LLaMA2-7B and LLaMA3.2-1B. 
}
\vskip -0.15in
\end{figure}

\begin{table}[h]

    \centering
    \scriptsize
    \caption{\label{tab:target_layer} \small Effect of the edited target-layer set on unlearning and utility on LLaMA3.2-1B (Forget-10). Epoch to Conv. reports the number of training epochs required to reach unlearning convergence under the same training budget and 1:1 forget and retain sampling.}
    \begin{tabular}{c|c c c}
    \toprule
      \textbf{Target Layer} &\textbf{E.S.}($\downarrow$) &\textbf{M.U.} ($\uparrow$) & \textbf{Epoch to Conv.}\\
      \midrule
      First Layer  & 0.498 &  0.588 & 40 \\
      First Five Layer & 0.502 & 0.591 & 40 \\
      Middle Layer &  0.487 & 0.591 & 40 \\
      Last Layer& 0.182 & 0.579 & 22 \\
      Last Two Layer& 0.172 & 0.577 & 27 \\
      Last Five Layer& 0.165 & 0.568 & 35 \\

    \bottomrule
    \end{tabular}

\end{table} 

\begin{figure}[h]
\centering
    \vspace{0pt}
    \centering
    \includegraphics[width=0.8\linewidth,height=0.35\linewidth]{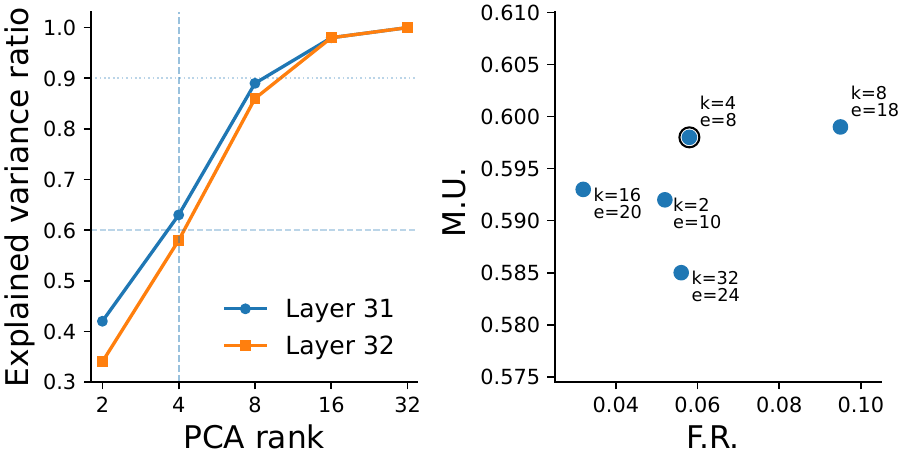}
    \caption{\small Analysis of the safe-behavior subspace and PCA rank selection on the last two layers of LLaMA2-7B. Left: the PCA explained-variance spectrum of safe-reference activations shows that the dominant safe-behavior variation is captured by a compact low-rank subspace. Right: the forgetting-utility trade-off across PCA ranks, where each point is annotated with the corresponding rank $k$ and convergence epoch $e$.}
    \label{fig:rank_analysis}
\end{figure}

Table~\ref{tab:target_layer} reports the effect of the edited layer set on target extraction strength (E.S.), model utility (M.U.), and the epoch at which unlearning converges. Editing early or middle layers results in nearly identical E.S. values even after 40 epochs, indicating that low- or mid-level representations are not an effective control point for suppressing target knowledge under our prompt-conditioned alignment objective. In contrast, constraining late-layer representations substantially reduces extractability, with E.S. dropping to around 0.17. This is consistent with the intuition that later-layer hidden states are more directly upstream of the final LM head, and therefore have a more immediate influence on the output token distribution. The last five layers' editing results show that increasing the number of aligned late layers strengthens unlearning but slows convergence. Overall, aligning the last one or two layers provides the best trade-off between strong unlearning in LLaMA3.2-1B, preserved utility, and training efficiency.

To test whether the geometric target used by GU is well justified, we examine whether safe-reference activations can be compactly represented by a low-dimensional subspace. As shown in Fig.~\ref{fig:rank_analysis}, a rank-4 PCA subspace already captures a majority of the variance on the edited layers, explaining 63\% and 58\% of the variance on the two layers, respectively. Increasing the rank to 16 captures 98\% of the variance on both layers. This result provides empirical support for the core low-rank assumption of GU, showing that the dominant variation of safe behavior is concentrated in a small number of directions rather than being dispersed across the full hidden space. Therefore, a compact PCA subspace can serve as a meaningful geometric target for projection-based alignment. The right sub-figure of Fig.~\ref{fig:rank_analysis} further shows that GU remains effective across a range of PCA ranks, with no consistent benefit from using larger subspaces. Since larger ranks generally require more optimization steps, we use $k=4$ as a default and efficient operating point.

\begin{table}[h]
\centering
\scriptsize
\caption{\small Matched source-free comparison using the same synthetic datasets on all baselines.}
\label{tab:matched_source_free_baselines}
\begin{tabular}{lcccc}
\toprule
Method & E.S. $\downarrow$ & F.R. $\downarrow$ & PrivLeak ($\approx 0$) & M.U. $\uparrow$ \\
\midrule
GA & 0.112 & 0.319 & -78.5 & 0.231 \\
GradDiff & 0.513 & 0.615 & -97.6 & 0.561 \\
NPO & 0.627 & 0.751 & -99.4 & 0.594 \\
SimNPO & 0.648 & 0.768 & -99.4 & 0.594 \\
UN-DIAL & 0.701 & 0.818 & -99.5 & 0.600 \\
RMU & 0.696 & 0.797 & -99.5 & 0.592 \\
\midrule
GU & 0.172 & 0.060 & -22.3 & 0.577 \\
\bottomrule
\end{tabular}
\vskip -0.1in
\end{table}

\paragraph{Matched Source-Free Baseline Comparison.}
To disentangle the effect of the geometric objective from that of the synthetic-data construction pipeline, we evaluate the baselines under the same source-free setting as GU on LLaMA3.2-1B. Specifically, all methods are trained using the identical synthetic datasets constructed by our pipeline used by GU. 

As shown in Table~\ref{tab:matched_source_free_baselines}, synthetic dataset construction alone is insufficient to achieve effective source-free unlearning. GA obtains relatively low extraction strength, but also with severe utility degradation. In contrast, GradDiff, NPO, SimNPO, UN-DIAL, and RMU preserve higher model utility but fail to suppress the target knowledge effectively, as reflected by their substantially higher E.S. and F.R. GU achieves the best overall unlearning and utility maintaining trade-off under the matched synthetic-data constraint, indicating that its gains cannot be attributed to synthetic data construction alone and that the geometric unlearning objective is essential for effective source-free selective unlearning.

\paragraph{Computational Efficiency.}

As shown in Table~\ref{tab:runtime_analysis}, GU converges in 31.29 minutes on LLaMA2-7B. Its training time is higher than that of several baselines, but remains comparable to GradDiff and SimNPO and lower than UN-DIAL. This moderate overhead mainly comes from training on a larger synthetic prompt set, which is used to improve contextual coverage around target anchors. The breakdown on the right side of Table~\ref{tab:runtime_analysis} shows that the GU-specific preprocessing cost is small: synthetic-data construction takes 12.37 minutes offline, while safe-reference extraction and safe-subspace construction require only 1.59 and 0.34 minutes, respectively. Overall, GU adds little geometry-specific overhead and remains practical in end-to-end runtime.

\begin{table}[h]
\centering
\scriptsize
\caption{\small Runtime analysis on LLaMA2-7B in the Forget-10 setting.}
\label{tab:runtime_analysis}
\begin{tabular}{@{}lcc|lc@{}}
\cmidrule(r){1-3} \cmidrule(l){4-5}

\multicolumn{3}{c|}{\textbf{Training Time Comparison}} &
\multicolumn{2}{c}{\textbf{GU Time Breakdown}} \\
\cmidrule(r){1-3} \cmidrule(l){4-5}
Method & Time (min) & Epoch &
Stage & Time (min) \\
\cmidrule(r){1-3} \cmidrule(l){4-5}
GA       & 8.98  & 7  & Virtual data construction      & 12.37 \\
GradDiff & 26.16 & 10 & Safe-reference extraction      & 1.59  \\
NPO      & 15.23 & 10 & Safe-subspace construction     & 0.34  \\
SimNPO   & 32.33 & 10 & Training to convergence        & 31.29 \\
\cmidrule(l){4-5}
UN-DIAL  & 44.51 & 18 & Total                 & 45.59 \\
\cmidrule(l){4-5}
RMU      & 2.56  & 6  &                                 &       \\
\cmidrule(r){1-3} 
GU       & 31.29 & 8  &                                 &       \\
\cmidrule(r){1-3}
\end{tabular}
\vskip -0.1in
\end{table}

\subsection{UnlearnPII Benchmark Evaluation Results}
Fig.~\ref{fig:pii} reports the trade-off between target knowledge suppression and retained model utility on the UnlearnPII benchmark across two model scales (LLaMA3.2-1B, LLaMA3.1-8B), and three forget splits (1\%, 5\%, 10\%). Unlike ToFU, where strong base checkpoints are commonly available, for UnlearnPII, we first train an original model on the provided dataset and then apply unlearning using the official split protocol. Each point in the Fig.~\ref{fig:pii} corresponds to a method evaluated at convergence; values on the y-axis closer to 0 indicate stronger suppression of target knowledge, while further right on the x-axis indicates better preservation of non-target utility. 

Across all splits, GU generally lies in the high-utility region and achieves the competitive target knowledge accuracy suppression relative to most baselines, indicating a favorable balance between unlearning and utility maintenance. In particular, methods such as GA and RMU achieve strong suppression but cluster at lower utility, indicating that they obtain unlearning by inducing aggressive behavioral collapse. GU does not always attain the strongest suppression achievable by the most aggressive methods; however, it achieves competitive target suppression while maintaining comparatively high utility.

Importantly, despite the unlearning effectiveness and the utility retention, these GU results are obtained without access to the original pretraining corpus: GU relies only on minimal synthetic prompts constructed from anchors and a small set of safe reference prompts, together with stabilization via synthetic retain anchors. This makes GU particularly suitable for privacy-oriented unlearning settings where training data access is restricted, while still delivering strong practical performance.
\begin{figure}[t]

    \centering
    \includegraphics[width=0.48\textwidth,height=0.3\textwidth]{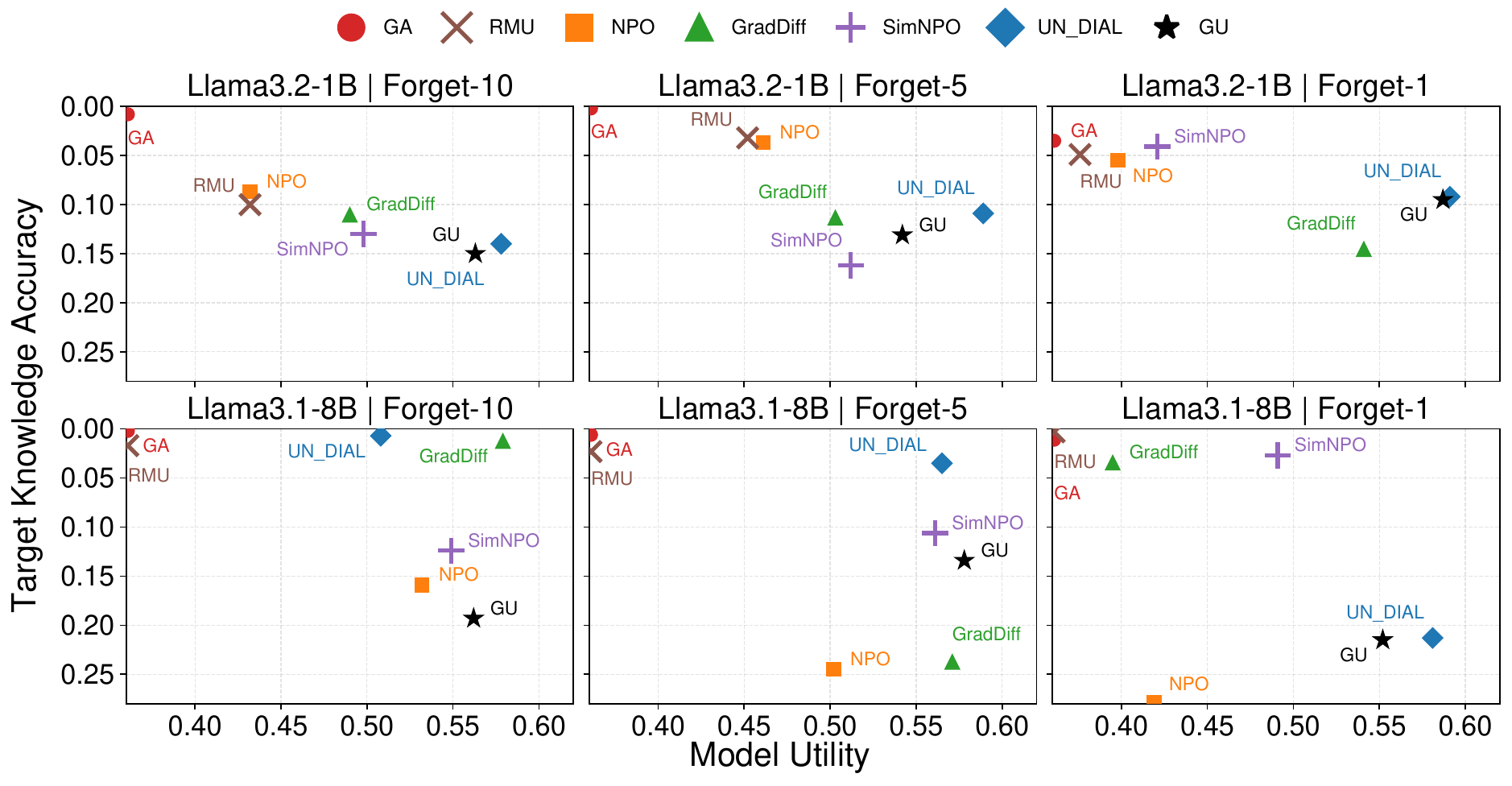}
    \caption{\label{fig:pii} \small Unlearning effectiveness and model utility trade-off across model scales and forget splits on UnlearnPII benchmark.  The y-axis reports target knowledge accuracy (close to 0 indicates better unlearning), and the x-axis reports retained model utility (higher indicates better utility preservation).}
\vskip -0.15in
\end{figure}
\paragraph{Large-Scale Unlearning.}
Fig.~\ref{fig:trend} shows GU's training trajectories under a large unlearning scale (20\% forget split) for two models. For both models, accuracy on the unlearning set decreases smoothly, indicating effective target suppression even when the unlearning dataset is large and diverse. Retaining accuracy exhibits a consistent two-stage pattern. In early epochs, updates are dominated by the anchor-based unlearning objective, and the model begins to drift on non-target behavior, leading to a drop in retaining accuracy. In later epochs, the signal for stabilization becomes stronger: as unlearning-induced drift increases the discrepancy between the target and the frozen teacher on non-target data (background tokens outside the unlearning window and the synthetic retain pool), the distillation KL produces larger corrective gradients that pull the student back toward the teacher on these regions. These dynamics illustrate that the stabilization objective can mitigate non-target drift during training, enabling retaining accuracy to partially recover while target suppression continues to improve.

Model scale affects both robustness and optimization in this Forget-20 setting. The smaller model shows lower retaining accuracy throughout training, indicating higher susceptibility to interference under strong unlearning pressure. The larger model preserves higher retaining accuracy overall, but requires more epochs to reach convergence, reflecting more gradual optimization under the larger unlearning split. Overall, these dynamics confirm that GU remains effective for large-scale unlearning, while the stabilization component helps restore and maintain utility once non-target drift emerges.

\begin{figure}[th]
    \centering
    \includegraphics[width=0.38\textwidth, height=0.2\textwidth]{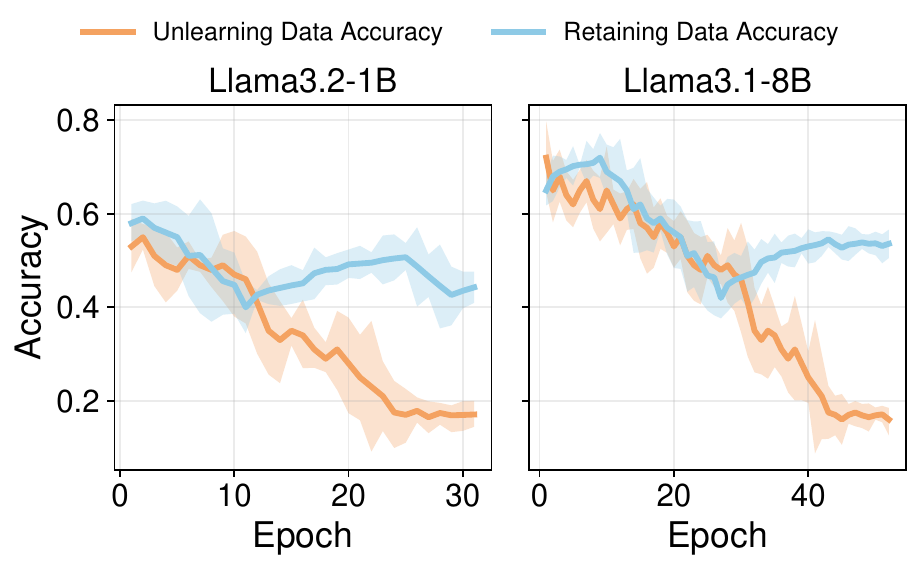}
    \caption{\label{fig:trend} \small Training dynamics under large-scale unlearning (20\% forget split) for LLaMA3.2-1B and LLaMA3.1-8B. We track accuracy on the unlearning and retaining sets over training epochs. Shaded bands indicate variability across runs. 
}
\vskip -0.1in
\end{figure}

\section{Conclusion}
We introduced Geometric Unlearning (GU), a selective unlearning framework that operates on prompt-conditioned hidden states without access to the original training corpus. GU distills a low-rank safe-behavior subspace from a set of safe reference prompts and uses anchor-triggered synthetic prompts to steer target-induced hidden states toward this safe-behavior subspace, while teacher-based stabilization preserves non-target utility. Across privacy-oriented benchmarks, GU achieves strong target suppression with limited utility degradation using only minimal synthetic data supervision, supporting a less-is-more view of unlearning via representation-level control. However, GU is strongest when the training targets are reliably invoked by anchors and when synthetic prompts adequately cover real query contexts during training; performance may degrade for training data constructed with aliases, multilingual variants, or indirect mentions, and distribution shift can leave rare contexts under-covered. Moreover, while GU aligns with standard extractability-based evaluations, it does not certify complete representational erasure under stronger white-box audits. To address these limitations, future work will broaden anchor and contextual coverage, improve prompt generation with adversarial strategies, and strengthen evaluation with representation-level audits; scaling to larger unlearn regimes may further benefit from more targeted stabilization.

\section*{Impact Statement}

This paper presents work whose goal is to advance the field of Machine Learning. There are many potential societal consequences of our work, none of which we feel must be specifically highlighted here.

\nocite{langley00}
\section*{Acknowledgment}
This paper is partially supported by Qilu University of Technology (Shandong Academy of Sciences) Youth Outstanding Talent Program No. 2024QZJH02 and Shandong Provincial University Youth Innovation and Technology Support Program No.2022KJ291. We also acknowledge that this work was supported by Monash eResearch capabilities, including M3.

\bibliography{reference}
\bibliographystyle{icml2026}

\newpage
\appendix
\onecolumn

\section{Fold-Back Confinement and Out-of-Subspace Energy}
\label{app:foldback-proof}
We prove Eq.~\ref{eq:fold_out_energy} for the idealized non-stabilized case, \textit{i.e.,} $\epsilon=0$ and $\tilde z_\ell \neq 0$.
\paragraph{Orthogonal Projector and Subspace Reflection.}
Fix a layer $\ell$ and let $V_\ell\in\mathbb R^{H\times k}$ contain the top-$k$ PCA directions
computed from $\mathcal D_{\mathrm{ref}}$, with orthonormal columns $V_\ell^\top V_\ell=I_k$.
Define the orthogonal projector onto the PCA subspace
\[
P_\ell := V_\ell V_\ell^\top,
\qquad
P_\ell^\perp := I - P_\ell.
\]
Since $V_\ell^\top V_\ell=I_k$, we have
\[
P_\ell^\top = (V_\ell V_\ell^\top)^\top = V_\ell V_\ell^\top = P_\ell,
\qquad
P_\ell^2 = V_\ell(V_\ell^\top V_\ell)V_\ell^\top = V_\ell V_\ell^\top = P_\ell,
\]
so $P_\ell$ is an orthogonal projector.
Hence every $z\in\mathbb R^H$ admits the unique orthogonal decomposition
\[
z = z_{\parallel} + z_{\perp},
\quad
z_{\parallel}:=P_\ell z \in \mathrm{span}(V_\ell),
\quad
z_{\perp}:=P_\ell^\perp z \in \mathrm{span}(V_\ell)^\perp,
\quad
\langle z_{\parallel}, z_{\perp}\rangle=0.
\]
The reflection across the subspace $\mathrm{span}(V_\ell)$ is defined by keeping the in-subspace
component unchanged and flipping the sign of the orthogonal component:
\[
R_\ell z \;:=\; z_{\parallel} - z_{\perp}.
\]
Substituting $z_{\parallel}=P_\ell z$ and $z_{\perp}=P_\ell^\perp z=(I-P_\ell)z$ yields
\[
R_\ell z = P_\ell z - (I-P_\ell)z = (2P_\ell-I)z,
\]
and therefore
\[
\boxed{R_\ell = 2P_\ell - I.}
\]

\begin{lemma}[Fold-back confinement suppresses out-of-subspace energy]
Fix layer $\ell$ and let $P_\ell=V_\ell V_\ell^\top$ be the orthogonal projector
onto the $k$-dimensional PCA subspace computed from $\mathcal D_{\mathrm{ref}}$, and
$R_\ell:=2P_\ell-I$ be the corresponding reflection across that subspace.
For any mean-shifted vector $\tilde z_\ell:=z_\ell-\mu_\ell^{\mathrm{ref}}$
(deviation from the reference mean) with $\tilde z_\ell \neq 0$, define its in- and out-subspace components
\[
\tilde z_{\ell,\parallel}:=P_\ell\tilde z_\ell \quad\text{(in-subspace component)},\qquad
\tilde z_{\ell,\perp}:=(I-P_\ell)\tilde z_\ell \quad\text{(orthogonal residual)}.
\]
Let $\tilde z_\ell^{\mathrm{fold}}:=R_\ell\tilde z_\ell$ (reflected state). For the non-stabilized cosine, we have
\[
1-\cos(\tilde z_\ell,\tilde z_\ell^{\mathrm{fold}})
=2\frac{\|\tilde z_{\ell,\perp}\|_2^2}{\|\tilde z_\ell\|_2^2}.
\]
\end{lemma}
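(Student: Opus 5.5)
The plan is a direct computation that uses only the orthogonal decomposition set up just before the lemma. Write $a:=\tilde z_{\ell,\parallel}$ and $b:=\tilde z_{\ell,\perp}$. Since $P_\ell$ is an orthogonal projector, $\langle a,b\rangle=0$, and by the derivation of $R_\ell=2P_\ell-I$ we have $\tilde z_\ell=a+b$ and $\tilde z_\ell^{\mathrm{fold}}=a-b$.

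The first step is to compute the two norms. By Pythagoras, $\|a+b\|_2^2=\|a\|_2^2+\|b\|_2^2=\|a-b\|_2^2$. Hence the reflection is an isometry, $\|\tilde z_\ell^{\mathrm{fold}}\|_2=\|\tilde z_\ell\|_2$, and this common value is nonzero because $\tilde z_\ell\neq 0$. Equivalently, one can note that $R_\ell^\top R_\ell=(2P_\ell-I)^2=4P_\ell^2-4P_\ell+I=I$, using $P_\ell^2=P_\ell$ and $P_\ell^\top=P_\ell$.

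The second step is the inner product. Expanding bilinearly and dropping the cross terms by orthogonality gives $\langle a+b,a-b\rangle=\|a\|_2^2-\|b\|_2^2$. The cosine is therefore
\[
\cos(\tilde z_\ell,\tilde z_\ell^{\mathrm{fold}})=\frac{\|a\|_2^2-\|b\|_2^2}{\|a\|_2^2+\|b\|_2^2}.
\]
Subtracting from $1$ over the common denominator gives
\[
1-\cos(\tilde z_\ell,\tilde z_\ell^{\mathrm{fold}})=\frac{2\|b\|_2^2}{\|a\|_2^2+\|b\|_2^2}=2\frac{\|\tilde z_{\ell,\perp}\|_2^2}{\|\tilde z_\ell\|_2^2},
\]
which is the claimed identity.

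There is no real obstacle. The only points that need care are that the denominator is nonzero, which follows from $\tilde z_\ell\neq0$ together with the isometry property, and that we are in the idealized case $\epsilon=0$, where the cosine is exactly $\langle u,v\rangle/(\|u\|_2\|v\|_2)$. As a consistency check, the right-hand side lies in $[0,2]$: it is $0$ exactly when $\tilde z_\ell$ lies in $\mathrm{span}(V_\ell)$, and $2$ exactly when $\tilde z_\ell$ is orthogonal to that span. This matches the interpretation of $L_{\mathrm{fold}}$ as penalizing normalized out-of-subspace energy.
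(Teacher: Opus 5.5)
Your proposal is correct and takes essentially the same route as the paper: orthogonally decompose $\tilde z_\ell$, use that the reflection preserves norms, compute the inner product as $\|\tilde z_{\ell,\parallel}\|_2^2-\|\tilde z_{\ell,\perp}\|_2^2$, and rearrange. Your explicit checks that $R_\ell^\top R_\ell=I$ and that the denominator is nonzero justify steps the paper only states.
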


\begin{proof}
By the definition of reflection across the projector subspace,
\[
\tilde z_\ell^{\mathrm{fold}}=R_\ell\tilde z_\ell
=\tilde z_{\ell,\parallel}-\tilde z_{\ell,\perp}
\quad\text{(keep $\parallel$, flip $\perp$).}
\]
Using $\langle \tilde z_{\ell,\parallel},\tilde z_{\ell,\perp}\rangle=0$ (orthogonality of the projector split),
\[
\langle \tilde z_\ell,\tilde z_\ell^{\mathrm{fold}}\rangle
=\langle \tilde z_{\ell,\parallel}+\tilde z_{\ell,\perp},\ \tilde z_{\ell,\parallel}-\tilde z_{\ell,\perp}\rangle
=\|\tilde z_{\ell,\parallel}\|_2^2-\|\tilde z_{\ell,\perp}\|_2^2.
\]
Also, $\|\tilde z_\ell\|_2^2=\|\tilde z_{\ell,\parallel}\|_2^2+\|\tilde z_{\ell,\perp}\|_2^2$ and
$\|\tilde z_\ell^{\mathrm{fold}}\|_2=\|\tilde z_\ell\|_2$ (reflection is norm-preserving), hence
\[
\cos(\tilde z_\ell,\tilde z_\ell^{\mathrm{fold}})
=\frac{\|\tilde z_{\ell,\parallel}\|_2^2-\|\tilde z_{\ell,\perp}\|_2^2}
{\|\tilde z_{\ell,\parallel}\|_2^2+\|\tilde z_{\ell,\perp}\|_2^2}
=1-2\frac{\|\tilde z_{\ell,\perp}\|_2^2}{\|\tilde z_\ell\|_2^2},
\]
which rearranges to the claim for the idealized non-stabilized case.
\end{proof}
This identity shows that, in the idealized non-stabilized reflection view, fold-back confinement corresponds to penalizing the relative residual energy outside the safe-behavior subspace. The stabilized cosine used in the main objective adds $\epsilon$ only for numerical stability. In implementation, we instantiate this confinement principle through projection-direction alignment and direct residual-energy control in the orthogonal complement.

\section{Experimental Setting}
\label{appendix:setting}
All experimental results were obtained on an NVIDIA GPU workstation equipped with an NVIDIA A100 80GB PCIe accelerator (80 GB HBM2e). Table~\ref{tab:es} summarizes our hyperparameter settings. All methods except GU are adopted from the OpenUnlearning framework with their default configurations. Empirically, we find that unlearning becomes harder when the forget set is smaller (\textit{e.g.}, Forget-01), likely due to weaker and noisier optimization signals. Therefore, we use a larger learning rate for Forget-01 than for Forget-05/10 to encourage more effective unlearning while keeping other settings unchanged. We evaluate GU on target models ranging from LLaMA3.2-1B and LLaMA2-7B to LLaMA3.1-8B, demonstrating its effectiveness across different model sizes. The evaluation metrics used in the experimental are shown in Table~\ref{tab:metrics-summary} grouped by three dimension: forget data memorization, privacy, and model utility. The details introduction of each metric shown in below.

\begin{table}
    \centering
    \scriptsize
    \caption{\label{tab:es} \small The baseline hyper-parameters setting in ToFU benchmark evaluation}
    \begin{tabular}{c|c c c}
    \toprule
      \textbf{Method} & \textbf{Parameters} & \textbf{Learning rate (Forget-01)} & \textbf{Learning rate (Forget-05 \& 10)}\\
      \midrule
      GA  & default & 1e-4 & 1e-5\\
      GradDiff & default & 1e-4 & 1e-5\\
      NPO & default & 1e-4 & 1e-5\\
      SimNPO & beta=0.5; gamma=0.5; delta=0.1; alpha=0.2 & 1e-4 & 1e-5\\
      UN-DIAL & beta=35 & 1e-4 & 1e-4\\
      RMU & alpha=0.5 & 1e-4 & 1e-5\\
      GU & $w_{L_{\mathrm{cent}}}$=0.5; $w_{L_{\mathrm{fold}}}$=0.5;  forget\_window=8& 1e-4 & 1e-5\\
    \bottomrule
    \end{tabular}
\end{table}

\begin{table}[t]
    \centering
    \scriptsize
    \setlength{\tabcolsep}{6pt}
    \renewcommand{\arraystretch}{1.15}
    \caption{\label{tab:metrics-summary} \small
    Evaluation metrics used in our experiments, grouped by dimension.}
    \begin{tabular}{l|p{0.78\linewidth}}
        \toprule
        \textbf{Dimension} & \textbf{Metrics (reported in this paper)} \\
        \midrule
        \textbf{Mem.} &
        \textbf{E.M.} Exact Memorization \cite{tirumala2022memorization};\quad
        \textbf{E.S.} Extraction Strength \cite{carlini2021extracting};\quad
        \textbf{F.R.} ROUGE on forget QA \cite{maini2024tofu};\quad
        \textbf{A.F.} Answer Fluency (anti-gibberish) \cite{mekala2025alternate}\\
        \midrule
        \textbf{Privacy} &
        \textbf{Privleak} Retrain-normalized membership leakage \cite{shi2025muse}. \\
        \midrule
        \textbf{Utility} &
        \textbf{M.U.} Model Utility \cite{maini2024tofu} (aggregate retain-side utility score: include training data excluded forget data and real world knowledge.)  \\
        \bottomrule
    \end{tabular}
\end{table}

\subsection{Evaluation Metrics}
\label{sec:metrics}

\paragraph{Notation.}
Let $f_\theta$ be the evaluated model. For a QA example, let $x$ be the prompt and
$y = (y_1,\dots,y_{|y|})$ be the reference answer tokens. Let $\hat y=f_\theta(x)$ denote the generated answer.
We write $p_\theta(\cdot \mid \cdot)$ for the model's next-token distribution, and
$\mathrm{ROUGE}(\hat y,y)$ for a standard ROUGE overlap score.

\paragraph{Exact Memorization (E.M.).}
We measure token-level exact memorization as the fraction of positions where the model's greedy prediction
matches the ground-truth continuation:
\begin{equation}
\mathrm{EM}(x,y)
=
\frac{1}{|y|}
\sum_{k=1}^{|y|}
\mathbf{1}\!\left[
\arg\max_{v} \; p_\theta\!\left(v \mid x, y_{<k}\right) = y_k
\right].
\end{equation}
\underline{Lower EM indicates less residual verbatim memorization.}

\paragraph{Extraction Strength (E.S.).}
Extraction Strength captures how little prefix is needed before the remaining suffix becomes exactly recoverable.
Define
\begin{equation}
k^\star(x,y) := \min\Big\{ k \in \{0,\dots,|y|\} \;:\; f_\theta([x,y_{<k}]) = y_{>k} \Big\},
\end{equation}
then the extraction strength is
\begin{equation}
\mathrm{ES}(x,y) := 1 - \frac{k^\star(x,y)}{|y|}.
\end{equation}
\underline{Lower ES indicates weaker extractability (better unlearning).}

\paragraph{Forget ROUGE (F.R.).}
On forget QA, we compute overlap between the generated answer and the reference answer:
\begin{equation}
\mathrm{FR}(x,y) := \mathrm{ROUGE}\!\left(\hat y, y\right).
\end{equation}
\underline{Lower FR indicates the model is less able to reproduce the target answer.}

\paragraph{Answer Fluency (A.F.).}
To diagnose collapse into blank or nonsensical outputs, we use a fluency score derived from a gibberish detector.
Let $g(\hat y)\in[0,1]$ be the detector's probability that $\hat y$ is gibberish; we report
\begin{equation}
\mathrm{AF}(\hat y) := 1 - g(\hat y).
\end{equation}
\underline{Higher A.F. indicates more fluent, non-gibberish answers.}

\paragraph{Privacy (Membership Inference Risk).}
We evaluate privacy leakage via membership inference attacks (MIAs), which attempt to infer whether a sample was
seen during training. Following MUSE~\cite{shi2025muse}, we treat forget examples as members and a disjoint holdout set
as non-members, with labels $m(x)=1$ for $x\in\mathcal{D}_{\mathrm{f}}$ and $m(x)=0$ for
$x\in\mathcal{D}_{\mathrm{h}}$. Given a model $f$ and an attack scoring function $s(\cdot;f)$, the attacker predicts
membership by thresholding:
\begin{equation}
\hat m_{\tau}(x) \;:=\; \mathbf{1}\!\left[s(x;f)\ge \tau\right].
\end{equation}
For any threshold $\tau$, the true positive rate (TPR) and false positive rate (FPR) are
\begin{equation}
\mathrm{TPR}(\tau) := \Pr\!\left(\hat m_\tau(x)=1 \mid m(x)=1\right),\qquad
\mathrm{FPR}(\tau) := \Pr\!\left(\hat m_\tau(x)=1 \mid m(x)=0\right).
\end{equation}
To obtain the ROC curve, we vary $\tau$ over all possible thresholds (equivalently, over all distinct score values),
and collect the corresponding points $\{(\mathrm{FPR}(\tau),\mathrm{TPR}(\tau))\}_\tau$.
The area under this curve is the ROC--AUC:
\begin{equation}
\mathrm{AUC}(f;\mathcal{D}_{\mathrm{f}},\mathcal{D}_{\mathrm{h}})
\;:=\;
\int_{0}^{1} \mathrm{TPR}\!\left(\mathrm{FPR}^{-1}(u)\right)\,du.
\end{equation}
Equivalently, AUC admits a threshold-free pairwise form as the probability that a random member receives a higher
attack score than a random non-member:
\begin{equation}
\label{eq:auc_pairwise}
\mathrm{AUC}(f;\mathcal{D}_{\mathrm{f}},\mathcal{D}_{\mathrm{h}})
=
\frac{1}{|\mathcal{D}_{\mathrm{f}}||\mathcal{D}_{\mathrm{h}}|}
\sum_{x\in\mathcal{D}_{\mathrm{f}}}\sum_{x'\in\mathcal{D}_{\mathrm{h}}}
\Big(
\mathbf{1}[s(x;f)>s(x';f)]
+\tfrac{1}{2}\mathbf{1}[s(x;f)=s(x';f)]
\Big).
\end{equation}
We report privacy risk as
$\mathrm{Priv}:=\big|\mathrm{AUC}(f;\mathcal{D}_{\mathrm{f}},\mathcal{D}_{\mathrm{h}})-0.5\big|$ (lower is better),
where AUC $=0.5$ corresponds to chance-level membership inference.

\paragraph{PrivLeak.}
When a retrained model $f_{\mathrm{retrain}}$ is available, we additionally report the normalized privacy leakage used in MUSE \cite{shi2025muse}:
\begin{equation}
\label{eq:privleak}
\mathrm{PrivLeak}
:=
\frac{
\mathrm{AUC}(f;\mathcal{D}_{\mathrm{f}},\mathcal{D}_{\mathrm{h}})
-
\mathrm{AUC}(f_{\mathrm{retrain}};\mathcal{D}_{\mathrm{f}},\mathcal{D}_{\mathrm{h}})
}{
\mathrm{AUC}(f_{\mathrm{retrain}};\mathcal{D}_{\mathrm{f}},\mathcal{D}_{\mathrm{h}})
}.
\end{equation}
\underline{It is close to zero for good unlearning. Note: The experimental result in each table for this metric is \% number. }

\paragraph{Min-K score.}
Let $\ell_t(x;f):=-\log p_f(x_{t}\mid x_{<t})$ be token-level negative log-probabilities.
Let $\mathcal{I}_K(x)$ denote indices of the smallest $K=\lceil (k/100)\,T\rceil$ values among $\{\ell_t\}_{t=1}^T$.
The Min-K score is
\begin{equation}
\label{eq:mia_mink}
s_{\mathrm{min}K}(x;f)
:=
\frac{1}{K}\sum_{t\in\mathcal{I}_K(x)} \ell_t(x;f).
\end{equation}
Intuitively, member sequences often contain a subset of tokens that are unusually easy for the model, making this statistic discriminative~\cite{shi2024detecting}.

\paragraph{Zlib-normalized score.}
We normalize model perplexity by a compression-based proxy for text entropy. Let $L_{\mathrm{zlib}}(x)$ denote the
compressed length (in bytes) of $x$ under zlib, and define $H_{\mathrm{zlib}}(x):=L_{\mathrm{zlib}}(x)/|x|$.
Then
\begin{equation}
\label{eq:mia_zlib}
s_{\mathrm{zlib}}(x;f)
:=
\frac{\log \mathrm{PPL}_f(x)}{H_{\mathrm{zlib}}(x)}.
\end{equation}
This reduces bias from intrinsically low-entropy text~\cite{carlini2021extracting}.

\paragraph{Reference-model score.}
Given a fixed reference model $f_{\mathrm{retrain}}$ (e.g., a retain-trained model), we score examples by a loss gap:
\begin{equation}
\label{eq:mia_ref}
s_{\mathrm{retrain}}(x;f,f_{\mathrm{retrain}})
:=
s_{\mathrm{loss}}(x;f)-s_{\mathrm{loss}}(x;f_{\mathrm{retrain}}).
\end{equation}
Using the gap mitigates shared difficulty effects and emphasizes membership-related differences \cite{dorna2025openunlearning}.

\paragraph{Model Utility (M.U.).}
We report a single aggregate utility score over the retain-side evaluation suite (as defined by the benchmark
protocol). Concretely, M.U. is an aggregate across multiple retain-oriented QA metrics (e.g., probability, ROUGE, and truth-based measures across retain and holdout splits), with higher indicating better non-target
preservation.

\section{Additional Evaluation Results}\label{adeval}
\subsection{Additional Main Results and MIA results}
The evaluation results comparison between the proposed unlearning method and baseline methods on the ToFU benchmark for the unlearning 5\% and 1\% datasets are shown in Table~\ref{tab:tofuResult-05} and Table~\ref{tab:tofuResult-01}, respectively. The privacy risk of membership inference attacks evaluation results comparison across unlearning methods and two base models: LLaMA3.2-1B and LLaMA2-7B (unlearning 1\%, 5\%, 10\% dataset) are shown in Table~\ref{tab:tofuMIA}.

\begin{table*}[h]
    \centering
    \caption{\label{tab:tofuResult-05}\small The evaluation results comparison between the proposed unlearning method and baseline methods on the ToFU benchmark for unlearning 5\% dataset. The results include unlearning effectiveness and model-utility maintenance comparison. E.S.: extraction strength of target knowledge; F.R.: ROUGE for target unlearning question answer; Privleak: privacy score; M.U.: Model utility, and \textcolor{red}{red} text shows the decrease. \(\uparrow\) means higher is better, \(\downarrow\) means lower is better, and $\approx$ 0 means close to zero is better.}
    \scriptsize
    \begin{tabular}{c | c c c | c | c c c | c }
        \toprule
        \multirow{3}{*}{Methods} & \multicolumn{4}{c|}{LLaMA3.2-1B} & \multicolumn{4}{c}{LLaMA2-7B} \\
        \cmidrule{2-9}
        & \multicolumn{3}{c|}{Unlearning Effectiveness} & M.U. & \multicolumn{3}{c|}{Unlearning Effectiveness} & M.U. \\
        \cmidrule{2-9}
        & E.S \(\downarrow\) & F.R. \(\downarrow\) & Privleak ($\approx$ 0)& Avg. \(\uparrow\)  & E.S \(\downarrow\) & F.R. \(\downarrow\) & Privleak ($\approx$0) & Avg. \(\uparrow\)\\
        \midrule

         GA \cite{jang-etal-2023-knowledge}  & \textbf{0.033} & 0.044 & -15.24 & 0.000 \textcolor{red}{(-0.598)} &  \textbf{0.027}& \textbf{0.019} & \textbf{-9.739} & 0.000 \textcolor{red}{(-0.623)} \\
         GradDiff \cite{yao2024large}& 0.087 & 0.374 & -45.97 & 0.457 \textcolor{red}{(-0.141)}&  0.207& 0.446 & -70.72 & 0.568 \textcolor{red}{(-0.055)} \\
         NPO \cite{zhang2024negative}& 0.091 & 0.313 & -69.39 & 0.447 \textcolor{red}{(-0.151)}& 0.246 & 0.534 & -96.72 & 0.510 \textcolor{red}{(-0.113)}  \\
         SimNPO \cite{fan2025simplicity}& 0.083 & 0.380 & -49.89 & 0.440 \textcolor{red}{(-0.158)}&   0.102 & 0.338 & 16.50 & 0.552 \textcolor{red}{(-0.071)}\\

         UN-DIAL \cite{dong-etal-2025-undial}& 0.051 & 0.267 & -95.42 & 0.569 \textcolor{red}{(-0.029)}& 0.045 & 0.317 & -94.75 & \textbf{0.570} \textcolor{red}{(-0.053)} \\
         RMU \cite{li2024wmdp}& 0.068 & 0.389 & -81.17& 0.461 \textcolor{red}{(-0.137)} & 0.032 & 0.180 & 1.653  & 0.034 \textcolor{red}{(-0.589)}  \\
          \midrule
        \textit{GU (Ours)} &  0.104 &  \textbf{0.037} &\textbf{1.131}&  \textbf{0.511} \textcolor{red}{(-0.087)}& 0.069  &  0.051  &28.33&  0.591 \textcolor{red}{(-0.032)}\\
        \bottomrule
    \end{tabular}
\end{table*}

\begin{table*}[h]

    \centering
    \caption{\label{tab:tofuResult-01}\small The evaluation results comparison between the proposed unlearning method and baseline methods on the ToFU benchmark for unlearning 1\% dataset. The results include unlearning effectiveness and model-utility maintenance comparison. E.S.: extraction strength of target knowledge; F.R.: ROUGE for target unlearning question answer; Privleak: privacy score; M.U.: Model utility, and \textcolor{red}{red} text shows the decrease. \(\uparrow\) means higher is better, \(\downarrow\) means lower is better, and $\approx$ 0 means close to zero is better.}
    \scriptsize
    \begin{tabular}{c | c c c | c | c c c | c }
        \toprule
        \multirow{3}{*}{Methods} & \multicolumn{4}{c|}{LLaMA3.2-1B} & \multicolumn{4}{c}{LLaMA2-7B} \\
        \cmidrule{2-9}
        & \multicolumn{3}{c|}{Unlearning Effectiveness} & M.U. & \multicolumn{3}{c|}{Unlearning Effectiveness} & M.U. \\
        \cmidrule{2-9}
        & E.S \(\downarrow\) & F.R. \(\downarrow\) & Privleak ($\approx$ 0)& Avg. \(\uparrow\)  & E.S \(\downarrow\) & F.R. \(\downarrow\) & Privleak ($\approx$0) & Avg. \(\uparrow\)\\
        \midrule

         GA \cite{jang-etal-2023-knowledge}  & \textbf{0.029} & \textbf{0.004} & 40.59 & 0.000 \textcolor{red}{(-0.598)} &  \textbf{0.024}& 0.002 & \textbf{13.71} & 0.000 \textcolor{red}{(-0.623)} \\
         GradDiff \cite{yao2024large}& 0.185 & 0.469 & -82.23 & 0.587 \textcolor{red}{(-0.011)} & 0.024 & \textbf{0.000} & 100.96 & 0.374 \textcolor{red}{(-0.249)} \\
         NPO \cite{zhang2024negative}& 0.035 & 0.182 & 87.32 & 0.391 \textcolor{red}{(-0.207)}& 0.554 & 0.497 & -83.21 & 0.561 \textcolor{red}{(-0.062)}  \\
         SimNPO \cite{fan2025simplicity}& 0.030 & 0.204 & 88.90 & 0.385 \textcolor{red}{(-0.213)}&  0.024  & 0.026& 101.5 & 0.479 \textcolor{red}{(-0.144)}\\

         UN-DIAL \cite{dong-etal-2025-undial}& 0.068 & 0.250 & -86.25 & \textbf{0.595} \textcolor{red}{(-0.003)}& 0.125 & 0.356 & -83.50 & \textbf{0.613} \textcolor{red}{(-0.020)} \\
         RMU \cite{li2024wmdp}& 0.029 & 0.014 & 88.90& 0.366 \textcolor{red}{(-0.232)} & 0.106 & 0.343 & -96.15  & 0.062 \textcolor{red}{(-0.561)}  \\
          \midrule
        \textit{GU (Ours)} &  0.083 &  0.100 &\textbf{6.613}&  0.578 \textcolor{red}{(-0.020)}& 0.113  &  0.127  &-40.32&  0.611 \textcolor{red}{(-0.012)}\\
        \bottomrule
    \end{tabular}
\end{table*}

\begin{table*}[h]

    \centering
    \caption{\label{tab:tofuMIA}\small The Privacy risk of membership inference attacks (MIAs) evaluation results (AUC) comparison across unlearning methods and two base models (unlearning 1\%, 5\%, 10\% dataset). Each row contains three points computed from different MIAs scoring metrics: Min-K, Reference, and Zlib. The Forget Quality is to measure the difference between the unlearned model and the retraining model. For raw ROC-AUC values, scores closer to 0.5 indicate lower membership distinguishability.}
    \scriptsize

    \begin{tabular}{c | c c c | c | c c c | c }
        \toprule
        \multirow{3}{*}{Methods (Forget-01)} & \multicolumn{4}{c|}{LLaMA3.2-1B} & \multicolumn{4}{c}{LLaMA2-7B} \\
        \cmidrule{2-9}
        & Min\_K & Reference & Zlib & Forget Quality  & Min\_K & Reference & Zlib & Forget Quality\\
        \midrule

         GA \cite{jang-etal-2023-knowledge}  & 0.256& 0.303& 0.367 &  1.86e-23&  0.436 & 0.273 & 0.368 &1.03e-23 \\
         GradDiff \cite{yao2024large}& 0.905 & 0.940 & 0.897 & 0.0143 & 0.003 & 0.006 & 0.005 & 1.86e-23\\
         NPO \cite{zhang2024negative}& 0.008 & 0.013 & 0.014 & 0.007 & 0.891 & 0.721 & 0.812 & 0.002  \\
         SimNPO \cite{fan2025simplicity}& 0.000 & 0.000 & 0.000 & 1.89e-06 & 0.000   & 0.000 & 0.000 & 3.063e-11 \\

         UN-DIAL \cite{dong-etal-2025-undial}& 0.927 & 0.755 & 0.652 & 0.007 & 0.918 & 0.887 & 0.602 & 0.054 \\
         RMU \cite{li2024wmdp}& 0.000 & 0.000 & 0.003& 0.001 & 0.981 & 0.952 & 0.936 & 0.267 \\
          \midrule
        \textit{GU (Ours)} &  0.436 &  0.330 &0.292&  0.579& 0.858  &  0.856  &0.813&  0.165\\
        \midrule

        \multirow{3}{*}{Methods (Forget-05)} & \multicolumn{4}{c|}{LLaMA3.2-1B} & \multicolumn{4}{c}{LLaMA2-7B} \\
        \cmidrule{2-9}
        & Min\_K & Reference & Zlib & Forget Quality  & Min\_K & Reference & Zlib & Forget Quality\\
        \midrule

         GA \cite{jang-etal-2023-knowledge}  & 0.477& 0.518& 0.349 &  5.83e-6&  0.414 & 0.201 & 0.329 &1.94e-19 \\
         GradDiff \cite{yao2024large}& 0.656 & 0.768 & 0.619 & 6.86e-9 & 0.810 & 0.876 & 0.739 & 2.44e-10\\
         NPO \cite{zhang2024negative}& 0.805 & 0.816 & 0.681 & 7.54e-5 & 0.978 & 0.822 & 0.797 & 2.44e-10  \\
         SimNPO \cite{fan2025simplicity}& 0.681 & 0.779 & 0.587 & 4.02e-06 & 0.244   & 0.166 & 0.189 & 1.39e-6 \\

         UN-DIAL \cite{dong-etal-2025-undial}& 0.970 & 0.845 & 0.732 & 6.57e-12 & 0.965 & 0.840 & 0.736 & 5.47e-12 \\
         RMU \cite{li2024wmdp}& 0.879 & 0.965 & 0.860& 4.86e-10 & 0.341 & 0.516 & 0.311 & 0.030 \\
          \midrule
        \textit{GU (Ours)} &  0.355 &  0.438 &0.298&  0.052& 0.167  &  0.212  &0.167&  0.002\\
        \midrule

        \multirow{3}{*}{Methods (Forget-10)} & \multicolumn{4}{c|}{LLaMA3.2-1B} & \multicolumn{4}{c}{LLaMA2-7B} \\
        \cmidrule{2-9}
        & Min\_K & Reference & Zlib & Forget Quality & Min\_K & Reference & Zlib & Forget Quality\\
        \midrule

         GA \cite{jang-etal-2023-knowledge}  & 0.515& 0.524& 0.362 &  1.06e-29&  0.545 & 0.670 & 0.377 &1.03e-29 \\
         GradDiff \cite{yao2024large}& 0.608 & 0.719 & 0.561 & 1.49e-16 & 0.009 & 0.007 & 0.005 & 1.80e-29\\
         NPO \cite{zhang2024negative}& 0.654 & 0.778 & 0.559 & 6.92e-5 & 0.943 & 0.942 & 0.685 & 1.61e-10  \\
         SimNPO \cite{fan2025simplicity}& 0.600 & 0.686 & 0.480 & 5.00e-05 & 0.454   & 0.207 & 0.222 & 2.19e-11 \\

         UN-DIAL \cite{dong-etal-2025-undial}& 0.977 & 0.876 & 0.770 & 4.35e-19 & 0.969 & 0.871 & 0.697 & 1.65e-18 \\
         RMU \cite{li2024wmdp}& 0.803 & 0.911 & 0.745 & 9.34e-13 & 0.035 & 0.038 & 0.077 & 9.34e-13 \\
          \midrule
        \textit{GU (Ours)} &  0.520 &  0.573 &0.433&  6.78e-7& 0.396  &  0.423  &0.352&  6.17e-7\\
        \bottomrule
    \end{tabular}
\end{table*}

\begin{table}[bh]
\centering
\small
\caption{\label{tab:component_ablation} \small Component analysis of GU on LLaMA2-7B in the Forget-05 setting. Removing the fold-back objective most strongly weakens target suppression, while removing retain stabilization mainly harms non-target utility.}
    \scriptsize
\begin{tabular}{lccc}
\toprule
\textbf{Configuration} & \textbf{E.S.} $\downarrow$ & \textbf{F.R.} $\downarrow$ & \textbf{M.U.} $\uparrow$ \\
\midrule
Full GU & 0.069 & 0.051 & 0.591 \\
w/o $\mathcal{L}_{cent}$ & 0.192 & 0.277 & 0.597 \\
w/o $\mathcal{L}_{fold}$ & 0.372 & 0.391 & 0.603 \\
w/o retain stabilization & 0.008 & 0.045 & 0.201 \\
\bottomrule
\end{tabular}
\end{table}

\subsection{Additional Ablation Study}
Table~\ref{tab:component_ablation} further shows that the three components of GU play complementary roles. Removing $\mathcal{L}_{cent}$ weakens target suppression, while removing $\mathcal{L}_{fold}$ causes the largest degradation in unlearning performance, indicating that the geometric confinement objective is central to effective unlearning. In contrast, removing retain stabilization yields more aggressive target suppression but substantially reduces the model's utility, indicating that this component is primarily responsible for limiting collateral drift.

We also examine the sensitivity of GU to the size of the safe reference set. As shown in Table~\ref{tab:safe_reference_sensitivity}, using only 2 safe prompts leads to substantially worse forgetting performance and utility preservation. Increasing the number of safe prompts to 10 markedly improves both F.R. and M.U., while further increasing the pool to 20 yields only modest additional gains. These results indicate that GU does not require a large safe reference set; instead, a small but reasonably diverse pool is sufficient to stabilize the safe-behavior subspace and support effective unlearning.

\begin{table}[h]
\centering
\small
\caption{\small Sensitivity to the number of safe reference prompts on LLaMA3.2-1B in the Forget-5 setting.}
\label{tab:safe_reference_sensitivity}
    \scriptsize
\begin{tabular}{cccc}
\toprule
Number of Safe Prompts & F.R. $\downarrow$ & M.U. $\uparrow$ & Epoch \\
\midrule
2  & 0.151 & 0.335 & 34 \\
10 & 0.058 & 0.508 & 23  \\
20 & 0.035 & 0.515 & 22  \\
\bottomrule
\end{tabular}
\end{table}

\begin{table}[th]
\centering
\small
\caption{Sensitivity to anchor choice in virtual-data construction. }
\label{tab:anchor_choice}
    \scriptsize
\begin{tabular}{lcccc}
\toprule
Training Anchor Set & E.S. $\downarrow$ & F.R. $\downarrow$ & PrivLeak ($\approx 0$) & M.U. $\uparrow$ \\
\midrule
Alias & 0.389 & 0.415 & -67.31 & 0.571 \\
Misspelled name & 0.315 & 0.352 & -41.73 & 0.575 \\
Target entity name & 0.172 & 0.060 & -22.27 & 0.577 \\
\bottomrule
\end{tabular}
\end{table}

\subsection{Sensitivity to Anchor Choice and Test-Time Robustness}
Table~\ref{tab:anchor_choice} shows that the quality of the training anchor is important for constructing effective synthetic prompts. Compared with using the target entity name, weaker anchors such as aliases or misspelled names lead to substantially worse forgetting performance, with higher E.S. and F.R., while model utility remains similar since the synthetic retain dataset is the same. This indicates that reliable anchors are needed to build high-quality virtual forget data.

\begin{table}[th]
\centering
\small
\caption{Test-time robustness under prompt variants after GU is trained with the target entity anchor.}
\label{tab:test_time_robustness}
    \scriptsize
\begin{tabular}{lcc}
\toprule
Test Prompt Type & E.S. $\downarrow$ & F.R. $\downarrow$ \\
\midrule
Alias & 0.192 & 0.113 \\
Paraphrased & 0.127 & 0.095 \\
Original & 0.172 & 0.060 \\
\bottomrule
\end{tabular}
\end{table}

At the same time, GU is not restricted to exact anchor matches at inference time once unlearning has been performed with a reliable target anchor. As shown in Table~\ref{tab:test_time_robustness}, the unlearned model remains effective under alias and paraphrased test prompts, achieving unlearning performance comparable to the original prompt setting. These results indicate that GU is primarily sensitive to training-time anchor quality for virtual-data construction, rather than to simple test-time prompt variations after unlearning.

\begin{figure}[b]
\centering
\includegraphics[width=0.5\columnwidth]{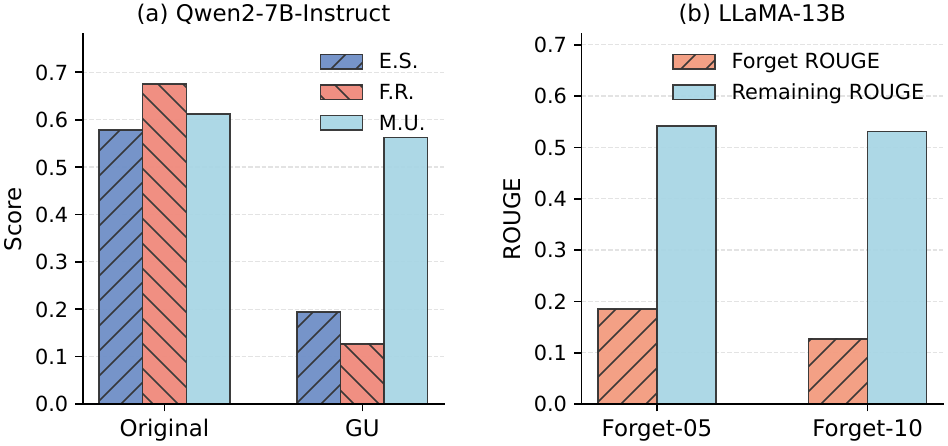}
\caption{\small Additional validation across architecture and scale. 
(a) On Qwen2-7B-Instruct, GU substantially reduces target extraction while preserving most of the model utility, showing effectiveness beyond the LLaMA family. 
(b) On LLaMA-13B, GU continues to reduce forget-set ROUGE under both Forget-05 and Forget-10 while retaining a substantial portion of remaining-data utility.}
\label{fig:cross_arch_large_scale}
\end{figure}

\begin{table}[h]
\centering
\small
\caption{\small Comparison with LUNAR variants and GU on LLaMA2-7B.}
\label{tab:lunar_comparison}
    \scriptsize
\begin{tabular}{lcc}
\toprule
Method / Setting & E.S. $\downarrow$ & M.U. $\uparrow$ \\
\midrule
LUNAR with original datasets & 0.127 & 0.513 \\
LUNAR w/o retain (original data) & 0.268 & 0.010 \\
LUNAR with synthetic datasets & 0.174 & 0.312 \\
GU with synthetic datasets & 0.114 & 0.598 \\
\bottomrule
\end{tabular}
\end{table}

\subsection{Cross-Architecture and Larger-Scale Validation}

We further evaluate GU beyond the main LLaMA setting to assess its robustness across architectures and model scales. As shown in Fig.~\ref{fig:cross_arch_large_scale}(a), on Qwen2-7B-Instruct, GU substantially reduces both E.S. and F.R., while preserving most of the model utility. This indicates that GU remains effective beyond the LLaMA family after light model-specific rank calibration. Fig.~\ref{fig:cross_arch_large_scale}(b) further shows that GU scales to LLaMA-13B: the forget-set ROUGE decreases to 0.185 and 0.127 under Forget-05 and Forget-10, respectively, while the remaining-data ROUGE only moderately declines. These results indicate that GU generalizes across both architecture and scale, maintaining effective unlearning with limited degradation of non-target performance.

\subsection{Comparison with LUNAR}
We further compare GU with LUNAR~\cite{NEURIPS2025_3ed6a903}, a closely related activation-redirection method. We first observe that, under the original-data setting, LUNAR benefits from richer data access but suffers severe utility collapse when retain supervision is removed, highlighting its dependence on retention-side signals. We then evaluate LUNAR under the same synthetic-data setting as GU. As shown in Table~\ref{tab:lunar_comparison}, LUNAR can achieve strong target suppression, but synthetic retain data only partially restores its utility and remains substantially weaker than GU. In contrast, GU achieves the best overall trade-off on LLaMA2-7B, with the lowest E.S. and the highest M.U. These results suggest that activation redirection alone is insufficient for source-free selective unlearning; GU's prompt-conditioned geometric alignment and retain-side stabilization are both important for suppressing target knowledge while preserving non-target utility.

\subsection{Relearning Analysis}

We further examine how the unlearned models produced by GU and the baselines behave when subsequently fine-tuned on the target data using LoRA. This experiment complements our main source-free evaluation by probing the persistence of behavioral suppression under renewed target supervision, rather than certifying complete erasure of the underlying parametric knowledge. As shown in Table~\ref{tab:relearning_resistance}, relearning partially restores forget-side performance, indicating that some suppressed target behavior remains recoverable. Nevertheless, GU exhibits smaller recovery than GradDiff, NPO, SimNPO, and UN-DIAL in this small-scale diagnostic, suggesting relatively stronger persistence of the learned forgetting behavior under the source-free constraint. RMU shows the smallest nonzero recovery, likely because its more aggressive representation-space suppression leaves less recoverable target behavior, although it also does not prevent relearning. GA's zero change should not be interpreted as stronger erasure, as it results from prior utility collapse. Overall, GU provides effective behavioral unlearning without access to the original training corpus, while stronger guarantees against parametric recovery remain an important direction for future work.

\begin{table}[h]
\centering
\small
\caption{\small Relearning analysis on LLaMA3.2-1B after 30 epochs of LoRA fine-tuning with 20\% target forget data.}
\label{tab:relearning_resistance}
    \scriptsize
\begin{tabular}{lcc}
\toprule
Method & $\Delta$F.R. $\downarrow$ & $\Delta$M.U. \\
\midrule
GA & +0.00 & +0.00 \\
GradDiff & +0.31 & +0.11 \\
NPO & +0.25 & +0.07 \\
SimNPO & +0.29 & +0.10 \\
UN-DIAL & +0.27 & +0.08 \\
RMU & +0.05 & +0.04 \\
GU & +0.13 & +0.05 \\
\bottomrule
\end{tabular}
\end{table}

\section{Discussion}

Our results highlight a representation-level perspective on selective unlearning: target behavior can be suppressed by steering prompt-conditioned hidden states, even when the original training corpus is unavailable. Geometric Unlearning operationalizes this view by distilling a low-rank safe-behavior geometry from a small set of safe reference prompts and aligning anchor-triggered hidden states to this geometry via localized projection-based objectives. Across privacy-oriented benchmarks, this approach achieves a strong unlearning–utility trade-off using only minimal synthetic supervision.

GU also has clear scope conditions. The method is most effective during training when the target can be reliably invoked by an anchor (\textit{e.g.,} entity strings); coverage may be weaker for targets invoked through diffuse topics, aliases, multilingual variants, or indirect references. Moreover, because GU relies on synthetic anchor-in-context prompts, distribution shift between synthetic prompts and real user queries can leave rare contexts under-covered. Finally, GU primarily enforces behavioral suppression by steering planning states toward safe responses; while this matches standard unlearning evaluations based on extractability and outputs, it does not certify complete removal of internal representations under stronger white-box audits.

These limitations suggest concrete directions for future work: expanding anchor coverage via automatic alias and multilingual augmentation; using active or adversarial prompt generation to improve contextual coverage under minimal data budgets; and complementing benchmark evaluations with stronger representation-level audits to assess residual information beyond verbatim extraction. Overall, our findings support a “less is more” principle for selective unlearning: controlling a small set of prompt-conditioned states with minimal synthetic supervision can yield effective forgetting while preserving general utility.

\section{Synthetic Data Generation and Data Samples}
\label{prompt}
Table~\ref{tab:gpt5virtprompt} and Table~\ref{tab:gpt5retprompt} illustrate the prompts for generating the synthetic training datasets.
\begin{table}[t]
\centering
\caption{\label{tab:gpt5virtprompt} Prompt used to guide GPT-5 to generate the virtual prompt set $\mathcal{D}_{\mathrm{virt}}$.}
\begin{tcolorbox}[
  colback=gray!5,
  colframe=black!30,
  title=GPT-5 Generation Prompt,
  width=\linewidth
]
You are helping construct a compact, high-coverage virtual prompt set for selective unlearning.

\textbf{Goal:} Given an anchor string \texttt{(ANCHOR)}, generate \texttt{N} diverse user-style prompts such that each prompt contains \texttt{(ANCHOR)} at least once, and the anchor is naturally embedded in the text.

\textbf{Coverage requirements (must satisfy all):}
\begin{itemize}
    \item \textbf{Bucketed intents (exact coverage):} Generate prompts in exactly the following 8 intent buckets, with \emph{at least one} prompt per bucket:
    \begin{enumerate}
        \item \textbf{BIO}: ask for a short fictional profile of \texttt{(ANCHOR)}.
        \item \textbf{FACT}: ask a factual-looking question about \texttt{(ANCHOR)} \emph{in a clearly fictional frame}.
        \item \textbf{INSTR}: request an action involving \texttt{(ANCHOR)} (e.g., write, plan, outline).
        \item \textbf{COMP}: compare \texttt{(ANCHOR)} with another fictional person/entity.
        \item \textbf{DIALOG}: a short dialogue snippet that includes \texttt{(ANCHOR)}.
        \item \textbf{SUMMARY}: ask to summarize or extract key points about \texttt{(ANCHOR)} from a fictional snippet.
        \item \textbf{PARA}: ask to paraphrase a fictional sentence that contains \texttt{(ANCHOR)}.
        \item \textbf{INDIRECT}: refer to \texttt{(ANCHOR)} indirectly (\textit{e.g.,} ``the person named \texttt{(ANCHOR)}”, pronoun-based reference) while still including the anchor string at least once.
    \end{enumerate}

    \item \textbf{Allocation rule (no choice):} If \texttt{N} is bigger than 8, produce exactly 1 prompt per bucket for the first 8 prompts; for any remaining prompts, repeat the bucket order (BIO $\rightarrow$ FACT $\rightarrow$ \dots ) until reaching \texttt{N}.

    \item \textbf{Anchor placement (exact):} Every prompt must contain \texttt{(ANCHOR)} at least once; exactly $\lceil 0.2N\rceil$ prompts must contain \texttt{(ANCHOR)} \emph{exactly twice}. No prompt may contain \texttt{(ANCHOR)} more than twice.

    \item \textbf{Surrounding attributes (exact):} Every prompt must include at least one surrounding attribute from the set \{a fictional organization, a fictional location, a fictional time cue, a fictional nearby entity/collaborator\}.
\end{itemize}

\textbf{Constraints:}
\begin{itemize}
    \item Keep all surrounding content clearly fictional or unverifiable.
    \item Do not provide any real-world claims about \texttt{(ANCHOR)}.
    \item Each item must be in Q\&A form.
\end{itemize}

\textbf{Output format:} Output a JSON list of objects with keys \{\texttt{"question"}, \texttt{"answer"}\}, where \texttt{"answer"} is a short generic placeholder (\textit{e.g.,} \texttt{"Not sure."}).

\textbf{Input:}\\
\textbf{ANCHOR:} \texttt{Nikolai Abilov}\\
\textbf{N:} \texttt{8}

\textbf{Example outputs (style + structure reference only):}

[
  {``question": ``In a fictional setting, write a two-sentence bio for Nikolai Abilov.",
   ``answer": ``Not sure."},
  {``question": ``Provide a fictional quote from Nikolai Abilov that reflects their values.",
   ``answer": ``Not sure."},
  {``question": ``Describe a challenge Nikolai Abilov overcame and what they learned (2 sentences).",
   ``answer": ``Not sure."},
  {``question": ``Create one simple FAQ entry about Nikolai Abilov in 'Q:' / 'A:' format (fictional).",
   ``answer": ``Not sure."},
  {``question": ``Write a brief fictional 'About' section (40--60 words) for Nikolai Abilov's profile page.",
   ``answer": ``Not sure."},
  {``question": ``Invent a fictional short-term (30-day) plan for Nikolai Abilov with 3 bullet points.",
   ``answer": ``Not sure."}
]

\end{tcolorbox}
\end{table}

\begin{table}[t]
\centering
\caption{\label{tab:gpt5retprompt} Prompt used to guide GPT-5 to generate the retain-side synthetic set $\mathcal{D}_{\mathrm{ret}}$.}
\begin{tcolorbox}[
  colback=gray!5,
  colframe=black!30,
  title=GPT-5 Generation Prompt (Retain Synthetic Prompts),
  width=\linewidth
]
You are constructing a retain-side synthetic prompt set for selective unlearning.

\textbf{Goal:} Generate a Q\&A-style dataset \texttt{N\_ret} for \emph{retain} training. The dataset must cover two name groups:
\begin{itemize}
    \item \textbf{Confusable names}: names that are intentionally similar to \texttt{(ANCHOR)} in surface form.
    \item \textbf{Unrelated names}: names that are clearly different from \texttt{(ANCHOR)}.
\end{itemize}

\textbf{Inputs:}
\begin{itemize}
    \item \textbf{ANCHOR:} \texttt{(ANCHOR)}
    \item \textbf{Confusable name list:} \texttt{CONFUSABLE[1..M]}
    \item \textbf{Unrelated name list:} \texttt{UNRELATED[1..K]}
    \item \textbf{N\_ret:} total number of Q\&A pairs to output
\end{itemize}

\textbf{Confusable-name construction rule (must satisfy at least one per confusable name):}
\begin{itemize}
    \item Each \texttt{CONFUSABLE[i]} must share at least one of the following with \texttt{(ANCHOR)}:
    \begin{enumerate}
        \item same first name token (e.g., identical given name);
        \item same last name token;
        \item same initials pattern (e.g., ``N.\ A.''-style);
        \item shared prefix of length $\geq 4$ on one token.
    \end{enumerate}
\end{itemize}

\textbf{Coverage requirements (must satisfy all):}
\begin{itemize}
    \item \textbf{Name-group ratio (exact):} Exactly 50\% of samples must use confusable names and 50\% must use unrelated names. If \texttt{N\_ret} is odd, allocate the extra one to unrelated names.
    \item \textbf{Per-name balance (exact):} For each list, distribute samples as evenly as possible across names (difference between any two names $\leq 1$).
    \item \textbf{Prompt templates (exact set):} Use exactly 6 prompt templates and cycle them in a fixed order to generate Q\&A. The six templates are:
    \begin{enumerate}
        \item fictional two-sentence bio request;
        \item fictional role + signature project request;
        \item fictional timeline request (3 bullet points);
        \item fictional occupation request;
        \item neutral mention inside an unrelated task (\textit{e.g.,} meeting notes);
        \item short-term (30-day) plan request (3 bullet points).
    \end{enumerate}
\end{itemize}

\textbf{Answer constraints:}
\begin{itemize}
    \item All answers must be clearly fictional and non-verifiable.
    \item Answers must be short (1--3 sentences, or 3 bullets when requested).
    \item Do not include any real-world claims, citations, or references.
\end{itemize}

\textbf{Output format:} Output a JSON list of objects with keys \{\texttt{"question"}, \texttt{"answer"}\}.

\end{tcolorbox}
\end{table}


\end{document}